\documentclass{article} 
\usepackage{iclr2019_conference,times}

\usepackage{amsmath,amsfonts,bm}
\usepackage{graphbox}

\def\eqref#1{equation~\ref{#1}}

\def\1{\bm{1}}

\def\eps{{\epsilon}}

\DeclareMathAlphabet{\mathsfit}{\encodingdefault}{\sfdefault}{m}{sl}
\SetMathAlphabet{\mathsfit}{bold}{\encodingdefault}{\sfdefault}{bx}{n}

\DeclareMathOperator*{\argmax}{arg\,max}

\DeclareMathOperator{\sign}{sign}

\usepackage{hyperref}
\usepackage{url}

\usepackage[utf8]{inputenc} 
\usepackage[T1]{fontenc}    
\usepackage{hyperref}       
\usepackage{url}            
\usepackage{booktabs}       
\usepackage{amsfonts}       
\usepackage{nicefrac}       
\usepackage{microtype}      

\usepackage{bm}
\usepackage{graphicx}
\usepackage{subfig}
\usepackage{float}  
\usepackage{tabularx}
\usepackage[export]{adjustbox}
\usepackage{amsmath} 
\usepackage{multirow}
\usepackage{slashbox} 
\usepackage{amsthm}

\newtheorem{proposition}{Proposition}
\newcommand{\supplementary}{appendix}
\newcommand{\modelname}{RoCGAN}

\title{Robust Conditional Generative Adversarial Networks}
\iclrfinalcopy

\author{
Grigorios G. Chrysos$^{1}$, \quad Jean Kossaifi$^{1}$, \quad Stefanos Zafeiriou$^{1}$ \\ 
{\textsuperscript{1} Department of Computing, Imperial College London, UK}\\
{\texttt{\{g.chrysos, jean.kossaifi, s.zafeiriou\}@imperial.ac.uk}}
}

\begin{document}

\maketitle

\begin{abstract}
Conditional generative adversarial networks (cGAN) have led to large improvements in the task of conditional image generation, which lies at the heart of computer vision. The major focus so far has been on performance improvement, while there has been little effort in making cGAN more robust to noise. The regression (of the generator) might lead to arbitrarily large errors in the output, which makes cGAN unreliable for real-world applications. In this work, we introduce a novel conditional GAN model, called \emph{\modelname}, which leverages structure in the target space of the model to address the issue. Our model augments the generator with an unsupervised pathway, which promotes the outputs of the generator to span the target manifold even in the presence of intense noise. We prove that \modelname{} share similar theoretical properties as GAN and experimentally verify that our model outperforms existing state-of-the-art cGAN architectures by a large margin in a variety of domains including images from natural scenes and faces.

\end{abstract}

\section{Introduction}
\label{sec:dense_reg_subsp_introduction}

Image-to-image translation and more generally conditional image generation lie at the heart of computer vision. Conditional Generative Adversarial Networks (cGAN)~\citep{mirza2014conditional} have become a dominant approach in the field, e.g. in dense\footnote{The output includes at least as many dimensions as the input, e.g. super-resolution, or text-to-image translation.} regression~\citep{isola2016image, pathak2016context, ledig2016photo, bousmalis2016domain, liu2017unsupervised, miyato2018cgans, yu2018generative, tulyakov2017mocogan}. They accept a source signal as input, e.g. prior information in the form of an image or text, and map it to the target signal (image). The mapping of cGAN does not constrain the output to the target manifold, thus the output can be arbitrarily off the target manifold~\citep{vidal2017mathematics}. This is a critical problem both for academic and commercial applications. To utilize cGAN or similar methods as a production technology, we need to study their generalization even in the face of intense noise. 

Similarly to regression, classification also suffers from sensitivity to noise and lack of output constraints. One notable line of research consists in complementing supervision with unsupervised learning modules. The unsupervised module forms a new pathway that is trained with the same, or different data samples. The unsupervised pathway enables the network to explore the structure that is not present in the labelled training set, while implicitly constraining the output. The addition of the unsupervised module is only required during the training stage and results in no additional computational cost during inference. \citet{rasmus2015semi} and \citet{zhang2016augmenting} modified the original bottom-up (encoder) network to include top-down (decoder) modules during training. However, in dense regression both bottom-up and top-down modules exist by default, and such methods are thus not trivial to extend to regression tasks. 

Motivated by the combination of supervised and unsupervised pathways, we propose a novel conditional GAN which includes implicit constraints in the latent subspaces. We coin this new model `\emph{Robust Conditional GAN}' (\emph{\modelname{}}). In the original cGAN the generator accepts a source signal and maps it to the target domain. In our work, we (implicitly) constrain the decoder to generate samples that span only the target manifold. We replace the original generator, i.e. encoder-decoder, with a two pathway module (see Fig.~\ref{fig:rocgan_motivational}). The first pathway, similarly to the cGAN generator, performs regression  while the second is an autoencoder in the target domain (unsupervised pathway). The two pathways share a similar network structure, i.e. each one includes an encoder-decoder network. The weights of the two decoders are shared which promotes the latent representations of the two pathways to be semantically similar. Intuitively, this can be thought of as constraining the output of our dense regression to span the target subspace. The unsupervised pathway enables the utilization of all the samples in the target domain even in the absence of a corresponding input sample. During inference, the unsupervised pathway is no longer required, therefore the testing complexity remains the same as in cGAN.  

\begin{figure}[!h]
    \centering
    \subfloat[cGAN]{\includegraphics[width=0.48\linewidth]{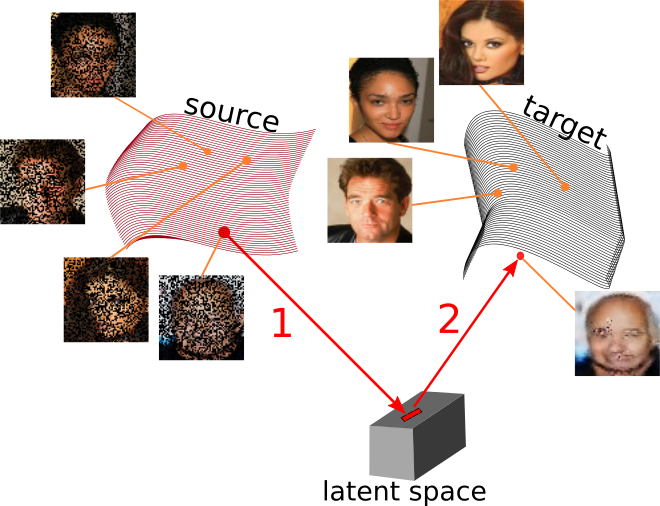}\hspace{5.9mm}}
    \subfloat[\modelname{}]{\includegraphics[width=0.48\linewidth]{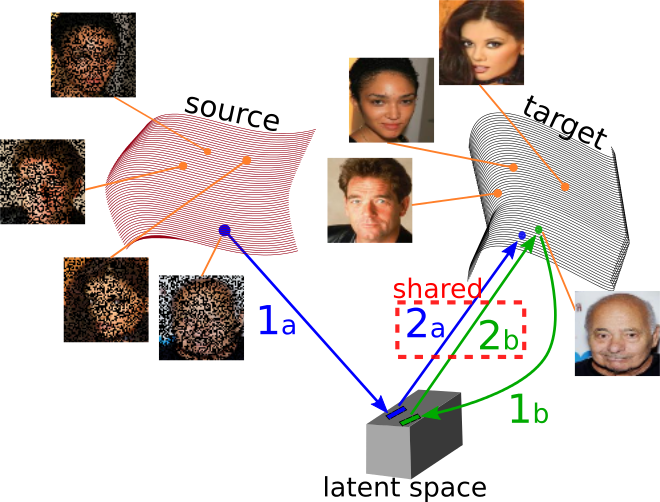}}
\caption{The mapping process of the generator of the baseline cGAN (in (a)) and our model (in (b)). (a) The source signal is embedded into a low-dimensional, latent subspace, which is then mapped to the target subspace.
The lack of constraints might result in outcomes that are arbitrarily off the target manifold. (b) On the other hand, in \modelname{}, steps 1b and 2b learn an autoencoder in the target manifold and by sharing the weights of the decoder, we restrict the output of the regression (step 2a). All figures in this work are best viewed in color.}
\label{fig:rocgan_motivational}
\end{figure}

In the following sections, we introduce our novel \modelname{} and study their (theoretical) properties.
We prove that \modelname{} share similar theoretical properties with the original GAN, i.e. convergence and optimal discriminator. An experiment with synthetic data is designed to visualize the target subspaces and assess our intuition. We experimentally scrutinize the sensitivity of the hyper-parameters and evaluate our model in the face of intense noise. Moreover, thorough experimentation with both images from natural scenes and human faces is conducted in two different tasks. We compare our model with both the state-of-the-art cGAN and the recent method of \citet{rick2017one}. The experimental results demonstrate that \modelname{} outperform the baseline by a large margin in all cases.

Our contributions can be summarized as following:
\begin{itemize}
    \item We introduce \modelname{} that leverages structure in the target space. The goal is to promote robustness in dense regression tasks. 
    \item We scrutinize the model performance under (extreme) noise and adversarial perturbations. To the authors' knowledge, this robustness analysis has not been studied previously for dense regression.
    \item We conduct a thorough experimental analysis for two different tasks. We outline how \modelname{} can be used in a semi-supervised learning task, how it performs with lateral connections from encoder to decoder.
\end{itemize}

\textbf{Notation}: Given a set of $N$ samples, $\bm{s}^{(n)}$ denotes the $n^{th}$ conditional label, e.g. a prior image; $\bm{y}^{(n)}$ denotes the respective target image. Unless explicitly mentioned otherwise $||\cdot||$ will declare an $\ell_1$ norm. The symbols $\mathcal{L}_{*}$ define loss terms, while $\lambda_{*}$ denote regularization hyper-parameters optimized on the validation set.

\label{sec:dense_reg_subsp_related}

Conditional image generation is a popular task in computer vision, dominated by approaches similar to cGAN. Apart from cGAN, the method by \citet{isola2016image}, widely known as `pix2pix', is the main alternative. Pix2pix includes three modifications over the baseline cGAN: i) lateral skip connections between the encoder and the decoder network are added in the generator, ii) the discriminator accepts pairs of source/gt and source/model output images, iii) additional content loss terms are added. The authors demonstrate how those performance related modifications can lead to an improved visual outcome. Despite the improved performance, the problem with the additional guarantees remains the same. That is we do not have any direct supervision in the process, since both the latent subspace and the projection are learned; the only supervision is provided by the ground-truth (gt) signal in the generator's output.

Adding regularization terms in the loss function can impose stronger supervision, thus restricting the output. The most frequent regularization term is feature matching, e.g. perceptual loss~\citep{ledig2016photo,johnson2016perceptual}, or embeddings for faces~\citep{schroff2015facenet}. Feature matching minimizes the distance between the  projection of generated and ground-truth signals. However, the pre-defined feature space is restrictive.
The method introduced by \citet{salimans2016improved} performs feature matching in the discriminator; the motivation lies in matching the low-dimensional distributions created by the discriminator layers. Matching the discriminator's features has demonstrated empirical success. However, this does not affect the generator and its latent subspaces directly.

A new line of research that correlates with our goals is that of adversarial attacks~\citep{szegedy2013intriguing, yuan2017adversarial, samangouei2018defense}. It is observed that perturbing input samples with a small amount of noise, often imperceptible to the human eye, can lead to severe classification errors. There are several techniques to `defend' against such attacks. A recent example is the Fortified networks of \citet{lamb2018fortified} which uses Denoising Autoencoders~\citep{vincent2008extracting} to ensure that the input samples do not fall off the target manifold. \citet{kumar2017semi} estimate the tangent space to the target manifold and use that to insert invariances to the discriminator for classification purposes. Even though \modelname{} share similarities with those methods, the scope is different since a) the output of our method is high-dimensional\footnote{In the classification tasks studied, e.g. the popular Imagenet~\citep{deng2009imagenet}, there are up to a thousand classes, while our output includes tens or hundreds of thousands of dimensions.} and b) adversarial examples are not extended  to dense regression\footnote{The robustness in our case refers to being resilient to changes in the distribution of the labels (label shift) and training set (covariance shift) ~\citep{wang2017robust}.}.

Except for the study of adversarial attacks, combining supervised and unsupervised learning has been used for enhancing the classification performance. In the Ladder network, \citet{rasmus2015semi} modify a typical bottom-up network for classification by adding a decoder and lateral connections between the encoder and the decoder. During training they utilize the augmented network as two pathways: i) labelled input samples are fed to the initial bottom-up module, ii) input samples are corrupted with noise and fed to the encoder-decoder with the lateral connections. The latter pathway is an autoencoder; the idea is that it can strengthen the resilience of the network to samples outside the input manifold, while it improves the classification performance. 

Our core goal consists in constraining the model's output. Aside from deep learning approaches, such constraints in manifolds were typically tackled with component analysis. Canonical correlation analysis~\citep{hotelling1936relations} has been extensively used for finding common subspaces that maximally correlate the data~\citep{panagakis2016robust}. The recent work of \citet{murdock2018deep} combines the expressiveness of neural networks with the theoretical guarantees of classic component analysis.

 \section{Method}
\label{sec:dense_reg_subsp_method}
In this section, we elucidate our proposed \modelname{}. To make the paper self-contained we first review the original conditional GAN model (sec.~\ref{ssec:dense_reg_subsp_cgan}), before introducing \modelname{} (sec.~\ref{ssec:dense_reg_subsp_rocgan}). Sequentially, we pose the modifications required in case of shortcut connections from the encoder to the decoder (sec.~\ref{ssec:dense_reg_subsp_rocgan_skip}).  In sec.~\ref{ssec:dense_reg_subsp_synthetic_exp} we assess the intuition behind our model with synthetic data. The core idea in \modelname{} is to leverage structure in the output space of the model. We achieve that by replacing the single pathway in the generator with two pathways. In the \supplementary{}, we study the theoretical properties of our method and prove that \modelname{} share the same properties as the original GAN~\citep{goodfellow2014generative}.

\subsection{Conditional GAN}
\label{ssec:dense_reg_subsp_cgan}

GAN consist of a generator and a discriminator module commonly optimized with alternating gradient descent methods. The generator samples $\bm{z}$ from a prior distribution $p_{\bm{z}}$, e.g. uniform, and tries to model the target distribution $p_{d}$; the discriminator $D$ tries to distinguish between the samples generated from the model and the target (ground-truth) distributions. Conditional GAN (cGAN)~\citep{mirza2014conditional} extend the formulation by providing the generator with additional labels. In cGAN the generator $G$ typically takes the form of an encoder-decoder network, where the encoder projects the label into a low-dimensional latent subspace and the decoder performs the opposite mapping, i.e. from low-dimensional to high-dimensional subspace. If we denote $\bm{s}$ the conditioning label and $\bm{y}$ a sample from the target distribution, the adversarial loss is expressed as:

\begin{equation}
\begin{split} 
    \mathcal{L}_{adv}(\bm{G}, \bm{D}) = \mathbb{E}_{\bm{s},\bm{y} \sim p_{d}(\bm{s},\bm{y})}[\log \bm{D}(\bm{y} | \bm{s})] + \\
    \mathbb{E}_{\bm{s} \sim p_{d}(\bm{s}), \bm{z} \sim p_{z}(\bm{z})}[\log (1-\bm{D}(\bm{G}(\bm{s},\bm{z}) | \bm{s}))]
    \label{eq:dense_reg_subsp_adversarial_loss}
\end{split}
\end{equation}

by solving the following min-max problem:
\begin{equation}
\begin{split} 
    \min_{\bm{w}_G} \max_{\bm{w}_D} \mathcal{L}_{adv}(\bm{G}, \bm{D}) = \min_{\bm{w}_G} \max_{\bm{w}_D} \mathbb{E}_{\bm{s},\bm{y} \sim p_{d}(\bm{s},\bm{y})}[\log \bm{D}(\bm{y} | \bm{s}, \bm{w}_D)] + \\
    \mathbb{E}_{\bm{s} \sim p_{d}(\bm{s}), \bm{z} \sim p_{z}(\bm{z})}[\log (1-\bm{D}(\bm{G}(\bm{s},\bm{z}| \bm{w}_G) | \bm{s}, \bm{w}_D))]
    \nonumber
\end{split}
\end{equation}

where $\bm{w}_G, \bm{w}_D$ denote the generator's and the discriminator's parameters respectively. To simplify the notation, we drop the dependencies on the parameters and the noise $\bm{z}$ in the rest of the paper.

The works of \citet{salimans2016improved} and \citet{isola2016image} demonstrate that auxiliary loss terms, i.e. feature matching and content loss, improve the final outcome, hence we consider those as part of the vanilla cGAN. The feature matching loss~\citep{salimans2016improved} is:
\begin{equation}
    \mathcal{L}_{f} = \sum_{n=1}^{N} ||\pi(\bm{G}(\bm{s}^{(n)})) - \pi(\bm{y}^{(n)})||
    \label{eq:dense_reg_subsp_projection_loss}
\end{equation}

where $\pi()$ extracts the features from the penultimate layer of the discriminator.

The final loss function for the cGAN is the following:
\begin{equation}
    \mathcal{L} = \mathcal{L}_{adv} + \lambda_c\cdot\underbrace{\sum_{n=1}^{N}||\bm{G}(\bm{s}^{(n)}) - \bm{y}^{(n)}||}_{content-loss} + \lambda_{\pi}\cdot\mathcal{L}_{f}
    \label{eq:dense_reg_subsp_cgan_total_loss}
\end{equation}

where $\lambda_c, \lambda_{\pi}$ are hyper-parameters to balance the loss terms.

\begin{figure}[!h]
    \subfloat[cGAN]{\includegraphics[width=0.45\linewidth]{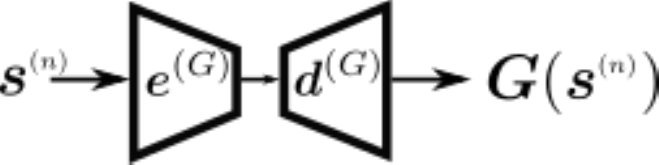}\hspace{5.9mm}\vspace{-55mm}}
    \hfill
    \subfloat[\modelname{}]{\includegraphics[width=0.48\linewidth,align=c]{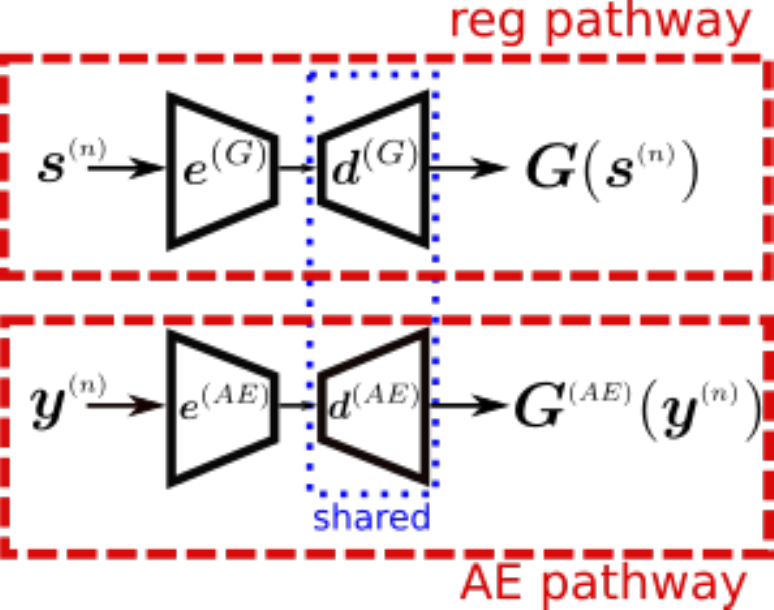}}
\caption{Schematic of the generator of (a) cGAN versus (b) our proposed \modelname{}. The single pathway of the original model is replaced with two pathways.}
\label{fig:dense_reg_subsp_rocgan_schematic}
\end{figure}

\subsection{\modelname{}}
\label{ssec:dense_reg_subsp_rocgan}
Just like cGAN, \modelname{} consist of a generator and a discriminator. The generator of \modelname{} includes two pathways instead of the single pathway of the original cGAN. The first pathway, referred as \emph{reg pathway} henceforth, performs a similar regression as its counterpart in cGAN; it accepts a sample from the source domain and maps it to the target domain. We introduce an additional unsupervised pathway, named \emph{AE pathway}. AE pathway works as an autoencoder in the target domain. Both pathways consist of similar encoder-decoder networks\footnote{In principle the encoders' architectures might differ, e.g. when the two domains differ in dimensionality; however, in our case they share the same architectures.}. By sharing the weights of their decoders, we promote the regression outputs to span the target manifold and not induce arbitrarily large errors. A schematic of the generator is illustrated in Fig.~\ref{fig:dense_reg_subsp_rocgan_schematic}. The discriminator can remain the same as the cGAN: it accepts the reg pathway's output along with the corresponding target sample as input. 

To simplify the notation below, the superscript `AE' abbreviates modules of the AE pathway and `G' modules of the reg pathway.We denote $\bm{G}(\bm{s}^{(n)}) = \bm{d}^{(G)}(\bm{e}^{(G)}(\bm{s}^{(n)}))$ the output of the reg pathway and $\bm{G}^{(AE)}(\bm{y}^{(n)}) = \bm{d}^{(AE)}(\bm{e}^{(AE)}(\bm{y}^{(n)}))$ the output of the AE pathway. 

The unsupervised module (autoencoder in the target domain) contributes the following loss term: 

\begin{equation}
    \mathcal{L}_{AE} = \sum_{n=1}^{N} [f_d(\bm{y}^{(n)}, \bm{G}^{(AE)} (\bm{y}^{(n)}))] 
    \label{eq:dense_reg_subsp_ae_loss}
\end{equation}

where $f_d$ denotes a divergence metric (in this work an $\ell_1$ loss).

Despite sharing the weights of the decoders, we cannot ensure that the latent representations of the two pathways span the same space. To further reduce the distance of the two representations in the latent space, we introduce the latent loss term $\mathcal{L}_{lat}$.  This term minimizes the distance between the encoders' outputs, i.e. the two representations are spatially close (in the subspace spanned by the encoders). The latent loss term is: 

\begin{equation}
    \mathcal{L}_{lat} = \sum_{n=1}^{N} ||\bm{e}^{(G)} (\bm{s}^{(n)}) - \bm{e}^{(AE)} (\bm{y}^{(n)})|| 
    \label{eq:dense_reg_subsp_latent_loss}
\end{equation}

The final loss function of \modelname{} combines the loss terms of the original cGAN (eq.~\ref{eq:dense_reg_subsp_cgan_total_loss}) with the additional two terms for the AE pathway:

\begin{equation}
    \mathcal{L}_{\modelname{}} = \mathcal{L}_{adv} + \lambda_c\cdot\underbrace{\sum_{n=1}^{N}||\bm{G}(\bm{s}^{(n)}) - \bm{y}^{(n)}||}_{content-loss} + \lambda_{\pi}\cdot\mathcal{L}_{f} + \lambda_{ae}\cdot\mathcal{L}_{AE} + \lambda_{l}\cdot\mathcal{L}_{lat}
    \label{eq:dense_reg_subsp_rocgan_loss}
\end{equation}

As a future step we intend to replace the latent loss term $\mathcal{L}_{lat}$ with a kernel-based method~\citep{gretton2007kernel} or a learnable metric for matching the distributions~\citep{ma2018disentangled}. 

\subsection{\modelname{} with skip connections}
\label{ssec:dense_reg_subsp_rocgan_skip}
The \modelname{} model of sec.~\ref{ssec:dense_reg_subsp_rocgan} describes a family of networks and not a predefined set of layers. A special case of \modelname{} emerges when skip connections are included in the generator. In the next few paragraphs, we study the modification required, i.e. an additional loss term. 

Skip connections are frequently used as they enable deeper layers to capture more abstract representations without the need of memorizing all the information. Nevertheless, the effects of the skip connections in the representation space have not been thoroughly studied. The lower-level representations are propagated directly to the decoder through the shortcut, which makes it harder to train the longer path~\citep{rasmus2015semi}, i.e. the network excluding the skip connections. 

This challenge can be implicitly tackled by maximizing the variance captured by the longer path representations. To that end, we add a loss term that penalizes the correlations in the representations (of a layer) and thus implicitly encourage the representations to capture diverse and useful information. We implement the decov loss introduced by \citet{cogswell2016reducing}:

\begin{equation}
    \mathcal{L}_{decov} = \frac{1}{2} (||\bm{C}||_F^2 - ||diag(\bm{C})||_2^2)
    \label{eq:dense_reg_subsp_decov_loss}
\end{equation}

where $diag()$ computes the diagonal elements of a matrix and $\bm{C}$ is the covariance matrix of the layer's representations. The loss is minimized when the covariance matrix is diagonal, i.e. it imposes a cost to minimize the covariance of hidden units without restricting the diagonal elements that include the variance of the hidden representations.  

A similar loss is explored by \citet{valpola2015neural}, where the decorrelation loss is applied in every layer. Their loss term has stronger constraints: i) it favors an identity covariance matrix but also ii) penalizes the smaller eigenvalues of the covariance more. We have not explored this alternative loss term, as the decov loss worked in our case without the additional assumptions of the \citet{valpola2015neural}.

 \subsection{Experiment on synthetic data}
\label{ssec:dense_reg_subsp_synthetic_exp}
We design an experiment on synthetic data to explore the differences between the original generator and our novel two pathway generator. Specifically, we design a network where each encoder/decoder consists of two fully connected layers; each layer followed by a RELU. We optimize the generators only, to avoid adding extra learned parameters. 

The inputs/outputs of this network span a low-dimensional space, which depends on two independent variables $x, y \in [-1, 1]$. We've experimented with several arbitrary functions in the input and output vectors and they perform in a similar way. We showcase here the case with input vector $[x, y, e^{2x}]$ and output vector $[x + 2y + 4, e^x + 1, x + y + 3, x + 2]$. The reg pathway accepts the three inputs, projects it into a two-dimensional space and the decoder maps it to the target four-dimensional space. 

We train the baseline and the autoencoder modules separately and use their pre-trained weights to initialize the two pathway network. The loss function of the two pathway network consists of the $\mathcal{L}_{lat}$ (eq.~\ref{eq:dense_reg_subsp_latent_loss}) and $\ell_2$ content losses in the two pathways. The networks are trained either till convergence or till $100,000$ iterations (batch size $128$) are completed. 

During testing, $6,400$ new points are sampled and the overlaid results are depicted in Fig.~\ref{fig:dense_reg_subsp_synthetic_data}; the individual figures for each output can be found in the \supplementary{}. The $\ell_1$ errors for the two cases are: $9,843$ for the baseline and $1,520$ for the two pathway generator. We notice that the two pathway generator approximates the target manifold better with the same number of parameters during inference.

\begin{figure}
    \hspace{-2.5mm}
    \includegraphics[width=0.238\linewidth]{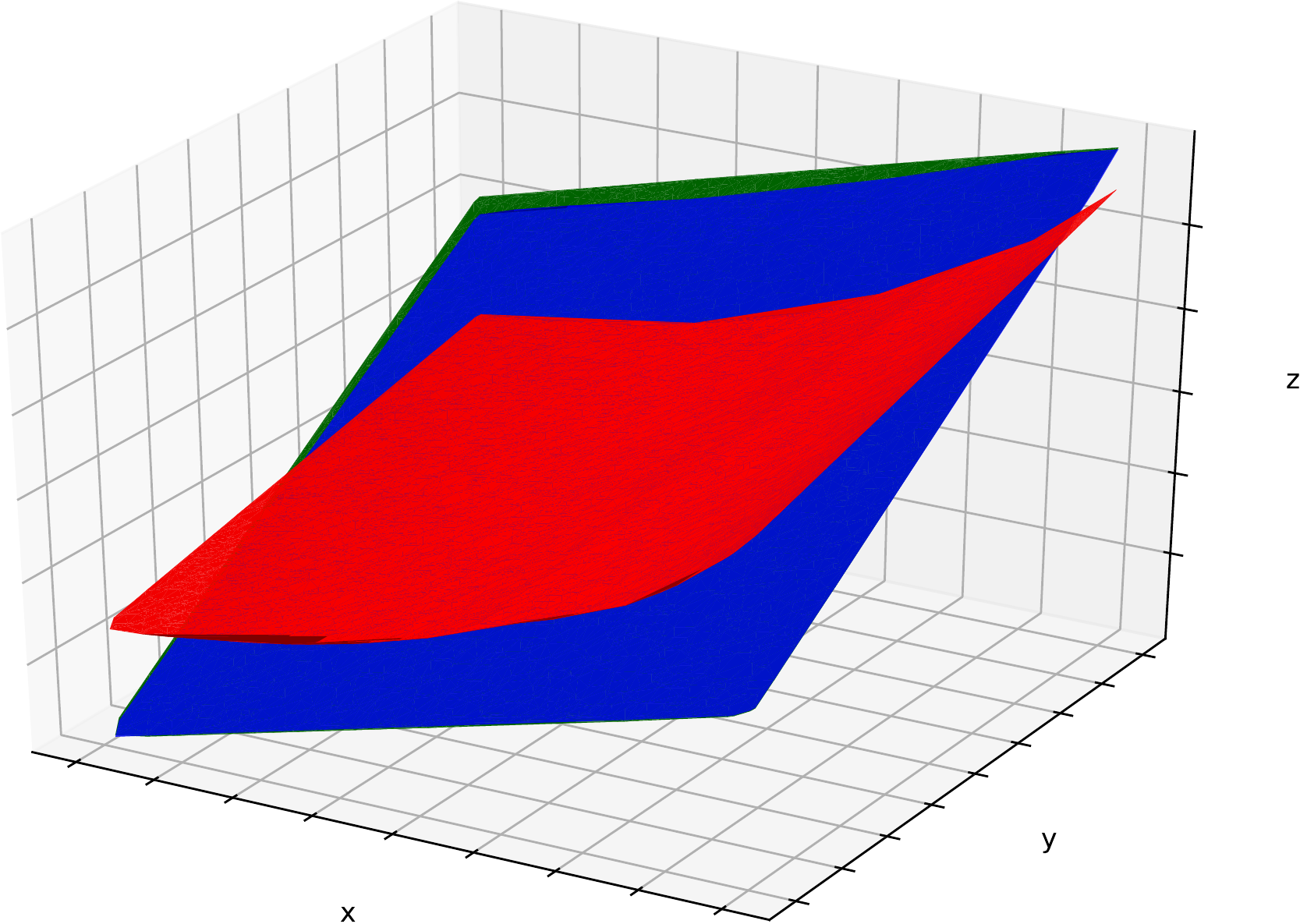} 
    \includegraphics[width=0.238\linewidth]{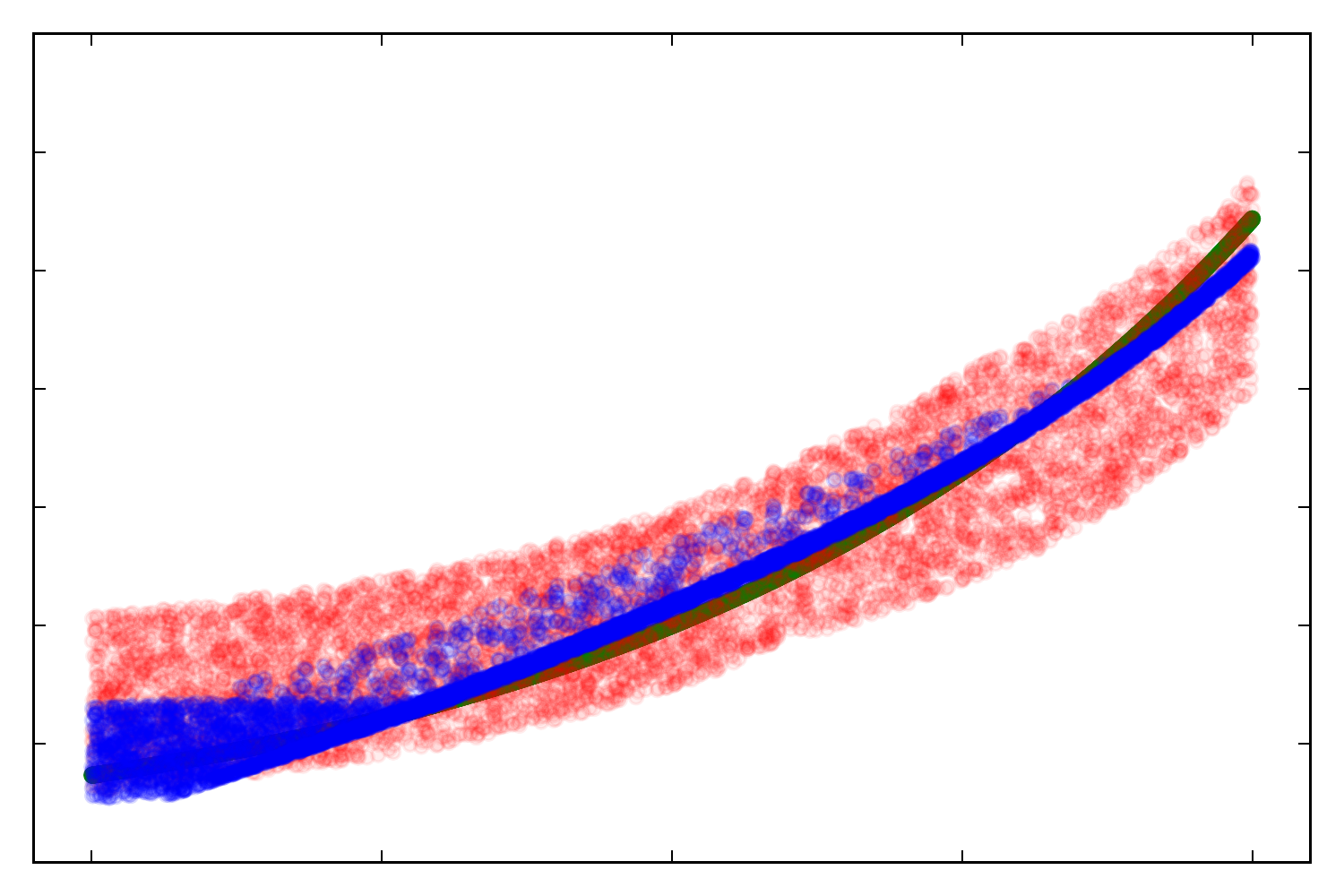} 
    \includegraphics[width=0.238\linewidth]{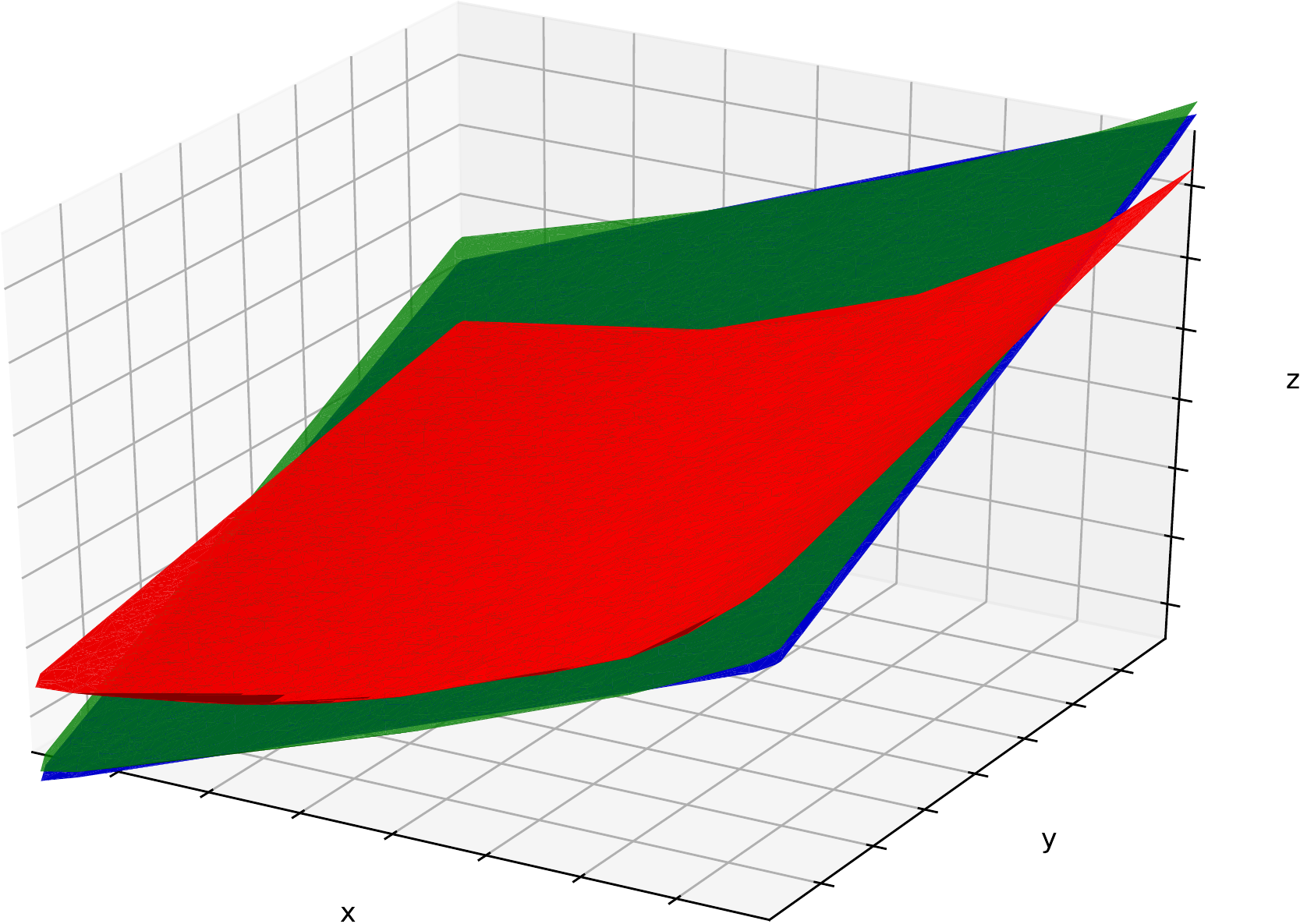} 
    \includegraphics[width=0.238\linewidth]{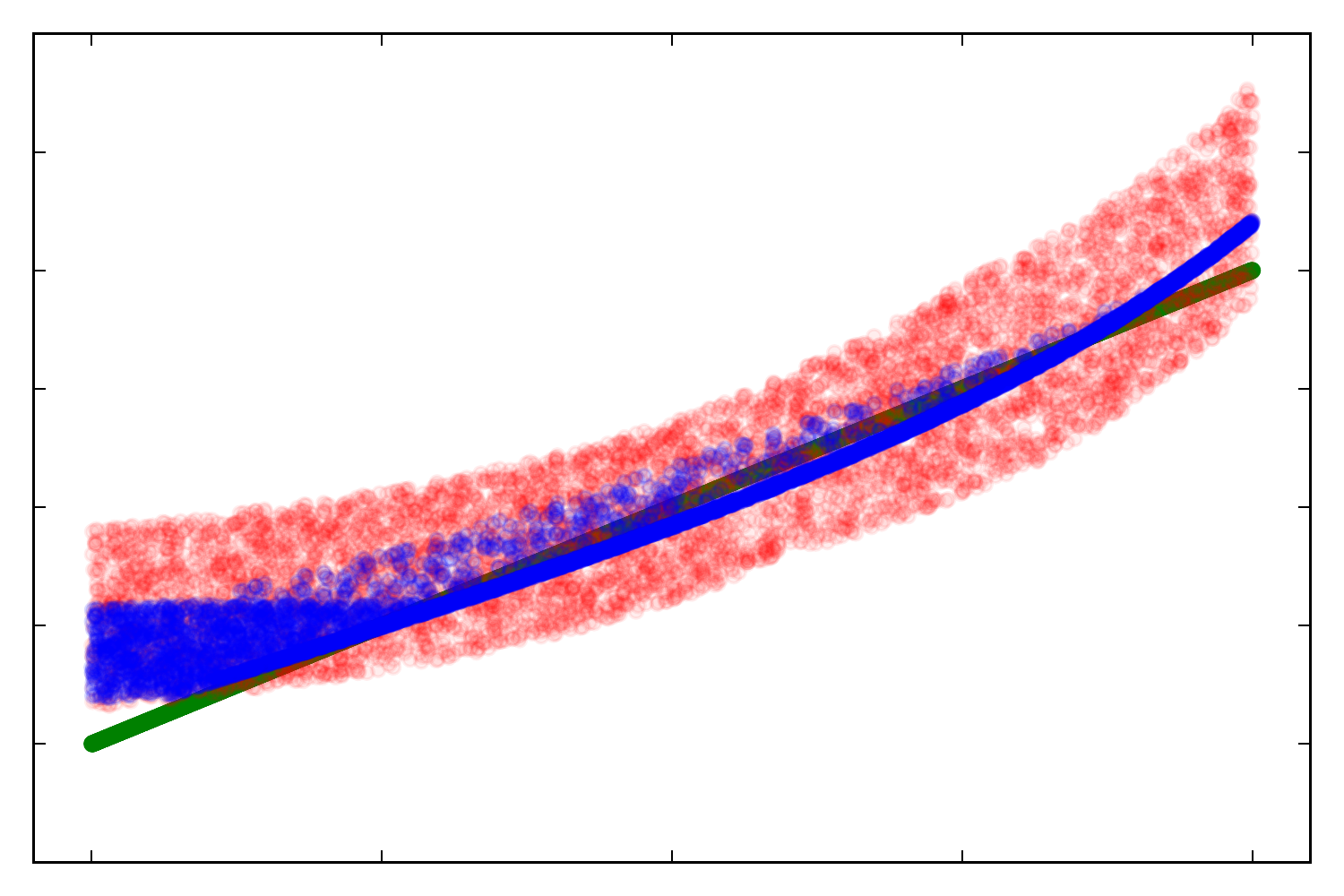}
    \caption{Qualitative results in the synthetic experiment of sec.~\ref{ssec:dense_reg_subsp_synthetic_exp}. Each plot corresponds to the respective manifolds in the output vector; the first and third depend on both $x, y$ (xyz plot), while the rest on $x$ (xz plot). The green color visualizes the target manifold, the red the baseline and the blue ours. Even though the two models include the same parameters during inference, the baseline does not approximate the target manifold as well as our method.} 
\label{fig:dense_reg_subsp_synthetic_data}
\end{figure}

 \section{Experiments}
\label{sec:dense_reg_subsp_experiments}
\textbf{Implementation details}: To provide a fair comparison to previous cGAN works, our implementation is largely based on the conventions of \citet{isola2016image, salimans2016improved, zhu2017toward}. A `layer' refers to a block of three units: a convolutional unit with a $4\times4$ kernel size, followed by Leaky RELU and batch normalization~\citep{ioffe2015batch}. To obtain \modelname{}, we augment a vanilla cGAN model as follows: i) we duplicate the encoder/decoder; ii) we share the decoder's weights in the two pathways; iii) we add the additional loss terms. 
The values of the additional hyper-parameters are $\lambda_{l} = 25$, $\lambda_{ae} = 100$ and $\lambda_{decov} = 1$; the common hyper-parameters with the vanilla cGAN, e.g. $\lambda_{c}$, $\lambda_{\pi}$, remain the same. The decov loss is applied in the output of the encoder, which in our experimentation did minimize the correlations in the longer path. The rest hyper-parameters remain the same as in the baseline.

We conduct a number of auxiliary experiments in the \supplementary{}. Specifically, an ablation study on the significance and the sensitivity of the hyper-parameters is conducted; additional architectures are implemented, while we evaluate our model under more intense noise. In addition, we extend the concept of adversarial examples in regression and verify that our model is more resilient to them than the baseline. The results demonstrate that our model accepts a range of hyper-parameter values, while it is robust to additional sources of noise.

We experiment with two categories of images with significant applications: images from i) natural scenes and ii) faces. In the natural scenes case, we constrain the number of training images to few thousand since frequently that is the scale of the labelled examples available. The network used in the experiments below, dumped `4layer', consists of four layers in the decoder, while the decoder followed by four layers in the decoder.

Two inverse tasks, i.e. denoising and sparse inpainting, are selected for our quantitative evaluation.  During training, the images are corrupted, for the two tasks, in the following way: for denoising $25\%$ of the pixels in each channel are uniformly dropped; for sparse inpainting $50\%$ of the pixels are converted to black. During testing, we evaluate the methods in two settings: i) similar corruption as they were trained, ii) more intense corruption, i.e. we drop $35\%$ of the pixels in the denoising case and $75\%$ of the pixels in the sparse inpainting case. The widely used image quality loss (SSIM)~\citep{wang2004image} is used as a quantitative metric.
We train and test our method against the i) baseline cGAN, ii) the recent strong-performing OneNet~\citep{rick2017one}. OneNet uses an ADMM learned prior, i.e. it projects the corrupted prior images into the subspace of natural images to guide the ADMM solver.  
In addition, we train an Adversarial Autoencoder (AAE)~\citep{makhzani2015adversarial} as an established method capable of learning compressed representations. Each module of the AAE shares the same architecture as its cGAN counterpart, while the AAE is trained with images in the target space. During testing, we provide the ground-truth images as input and use the reconstruction for the evaluation. In our experimental setting, AAE can be thought of as an upper performance limit of \modelname{}/cGAN for a given capacity (number of parameters).

\subsection{Natural scenes}
We train the `4layer' baseline/\modelname{} with images from natural scenes, both indoors and outdoors. The $4,900$ samples of the VOC 2007 Challenge~\citep{everingham2010pascal} form the training set, while the $10,000$ samples of tiny ImageNet~\citep{deng2009imagenet} consist the testing set.

The quantitative evaluation with SSIM is presented in Tab.~\ref{tab:dense_reg_subsp_core_experimental_results}. OneNet~\citep{rick2017one} does not perform as well as the baseline or our model. From our experimentation this can be attributed to the projection to the manifold of natural images that is not trivial, however it is more resilient to additional noise than the baseline. In both inverse tasks \modelname{} improve the baseline cGAN results by a margin of $0.05$ ($10-13\%$ relative improvement). When we apply additional corruption in the testing images, \modelname{} are more robust with a considerable improvement over the baseline. This can be attributed to the implicit constraints of the AE pathway, i.e. the decoder is more resilient to approximating the target manifold samples.

\begin{table}[htb]
\centering
\begin{tabular}{p{3cm} r c|c||c|c@{\hskip 15pt} c|c||c|c}  
\toprule
  \multirow{4}{*}{\backslashbox{\emph{Method}}{\emph{Obj. / Task}}} & & \multicolumn{4}{c}{Faces} &  \multicolumn{4}{c}{Natural Scenes} \\
  \cmidrule(lr){3-6} \cmidrule(lr){7-10} 
  & &  \multicolumn{2}{c}{Denoising} &  \multicolumn{2}{c}{Sparse Inpaint.}  &  \multicolumn{2}{c}{Denoising} &  \multicolumn{2}{c}{Sparse Inpaint.} \\
   \cmidrule{3 - 10}
   & & $25\%$ & $35\%$ & $50\%$ & $75\%$ &  $25\%$ & $35\%$ & $50\%$ & $75\%$ \\
  \cmidrule[\heavyrulewidth](){1 - 10}
       \small{\cite{rick2017one}} &    &  0.758 &  0.748 &  0.701  & 0.682 &      0.591 & 0.574 & 0.585 & 0.535 \\
          Baseline-4layer &    &  0.803 &   0.765 &  0.801  & 0.701 &      0.628 & 0.599 & 0.639 & 0.542 \\
          Ours-4layer  &    &  0.834 &   0.821 &  0.804  & 0.708 &      0.668 & 0.654 & 0.648 & 0.548 \\
    \cmidrule{1 - 10}
          AAE           &   &  \multicolumn{4}{c}{0.866} & \multicolumn{4}{c}{0.702} \\
\bottomrule
\end{tabular}
\caption{Quantitative results in the `4layer' network in both faces and natural scenes cases. For both `objects' we compute the SSIM. In both denoising and sparse inpainting, the leftmost evaluation is the one with corruptions similar to the training, while the one on the right consists of samples with additional corruptions, e.g. in denoising $35$\% of the pixels are dropped.} 
\label{tab:dense_reg_subsp_core_experimental_results}
\end{table}

\begin{figure}[t]
    \subfloat[]{\includegraphics[width=0.161\linewidth]{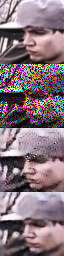}}
    \subfloat[]{\includegraphics[width=0.161\linewidth]{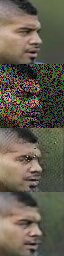}}
    \subfloat[]{\includegraphics[width=0.161\linewidth]{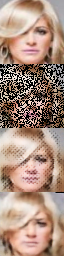}}
    \subfloat[]{\includegraphics[width=0.161\linewidth]{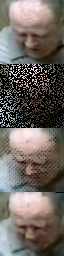}}
    \subfloat[]{\includegraphics[width=0.161\linewidth]{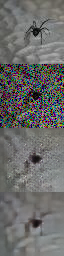}}
    \subfloat[]{\includegraphics[width=0.161\linewidth]{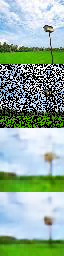}}
    \caption{Qualitative results (\emph{best viewed in color}). The first row depicts the target image, the second row the corrupted one (used as input to the methods). The third row depicts the output of the baseline cGAN, while the outcome of our method is illustrated in the fourth row. There are different evaluations visualized for faces: (a) denoising, (b) denoising with additional noise at test time, (c) sparse inpainting, (d) sparse inpainting with $75\%$ black pixels. For natural scenes the columns (e) and (f) denote the denoising and sparse inpainting results respectively.}
    \label{fig:dense_reg_subsp_qualitative_results}
\end{figure}

\subsection{Faces}
\label{ssec:dense_reg_subsp_experiment_face}
In this experiment we utilize the MS-Celeb~\citep{guo2016ms} as the training set ($3,4$ million samples), and the whole Celeb-A~\citep{liu2015deep} as the testing set ($202,500$ samples). The large datasets enable us to validate our model extensively in a wide range of faces. 

We use the whole training set to train the two compared methods (Baseline-4layer and Ours-4layer) and the 4-layer AAE. The results of the quantitative evaluation exist in table~\ref{tab:dense_reg_subsp_core_experimental_results}. Our method outperforms both the baseline and OneNet by a significant margin; the difference increases when evaluated with more intense corruptions. The reason that the sparse inpainting task appears to have a smaller improvement remains elusive; in the different architectures in the \supplementary{} our model has similar performance in the two tasks. We include the AAE as an upper limit of the representation capacity of the architecture. The AAE result specifies that with the given architecture the performance can be up to $0.866$.

 \section{Conclusion}
\label{sec:dense_reg_subsp_conclusion}

We introduce the Robust Conditional GAN (\modelname{}) model, a new conditional GAN capable of leveraging unsupervised data to learn better latent representations, even in the face of large amount of noise. \modelname{}'s generator is composed of two pathways. The first pathway (\emph{reg pathway}), performs the regression from the source to the target domain.
The new, added pathway (\emph{AE pathway}) is an autoencoder in the target domain. 
By adding weight sharing between the two decoders, we implicitly constrain the reg pathway to output images that span the target manifold. 
The linear analogy along with the synthetic experiment 
demonstrate how \modelname{} can create more robust results, while we prove that our model shares similar convergence properties with generative adversarial networks. The ablation study dictates that our model is more resilient to intense noise and more robust to adversarial examples than the baseline.
The experimental results with images (natural scenes and faces) showcase that \modelname{} outperform existing, state-of-the-art conditional GAN models.  \section{Acknowledgements}
\label{sec:dense_reg_subsp_acks}

We would like to thank Markos Georgopoulos for our fruitful conversations during the preparation of this work. The work of Grigorios Chrysos was partially funded by an Imperial College DTA. The work of Stefanos Zafeiriou was partially funded by the EPSRC Fellowship DEFORM: Large Scale Shape Analysis of Deformable Models of Humans (EP/S010203/1) and a Google Faculty Award.

\bibliography{nips18}
\bibliographystyle{iclr2019_conference}

\newpage
\appendix

\section{Introduction}

In this following sections (of the appendix) we include additional insights, a theoretical analysis along with additional experiments. 
The sections are organized as following:

\begin{itemize}
    \item In sec.~\ref{sec:dense_reg_subsp_linear_analogy} we validate our intuition for the \modelname{} constraints through the linear equivalent. 
    \item A theoretical analysis is provided in sec.~\ref{sec:dense_reg_subsp_detailed_theoretical_analysis}.
    \item We implement different networks in sec.~\ref{sec:dense_reg_subsp_experimental_details} to assess whether the performance gain can be attributed to a single architecture.
    \item An ablation study is conducted in sec.~\ref{sec:rocgan_ablation} comparing the hyper-parameter sensitivity and the robustness in the face of extreme noise.
\end{itemize}

The Fig.~\ref{fig:rocgan_synthetic_exp_00}, \ref{fig:rocgan_synthetic_exp_10}, \ref{fig:rocgan_synthetic_exp_20}, \ref{fig:rocgan_synthetic_exp_30} include all the outputs of the synthetic experiment of the main paper. As a reminder, the output vector is $[x + 2y + 4, e^x + 1, x + y + 3, x + 2]$ with $x, y \in [-1, 1]$.

\begin{figure}[h!]
    \includegraphics[width=0.238\linewidth]{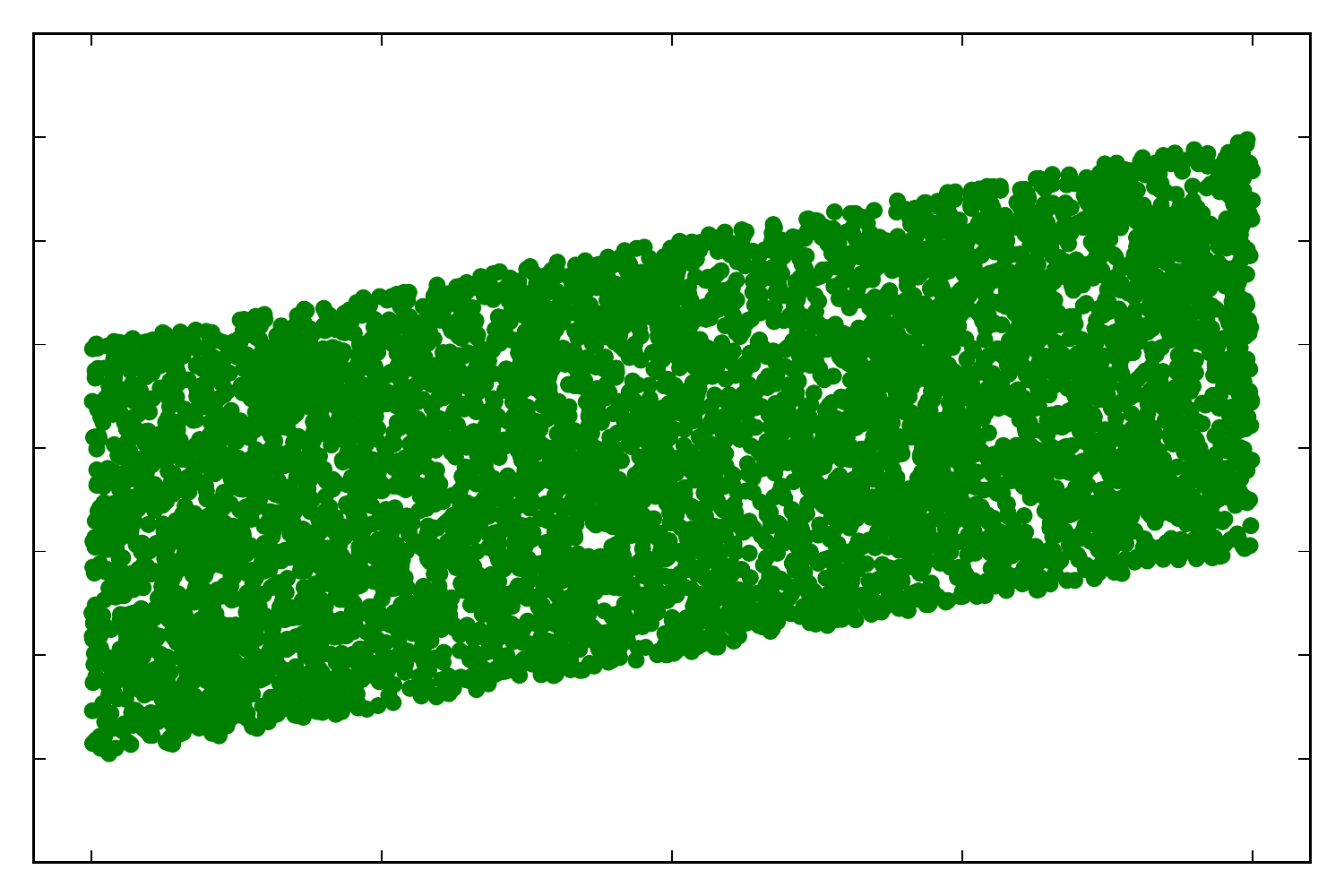}
    \includegraphics[width=0.238\linewidth]{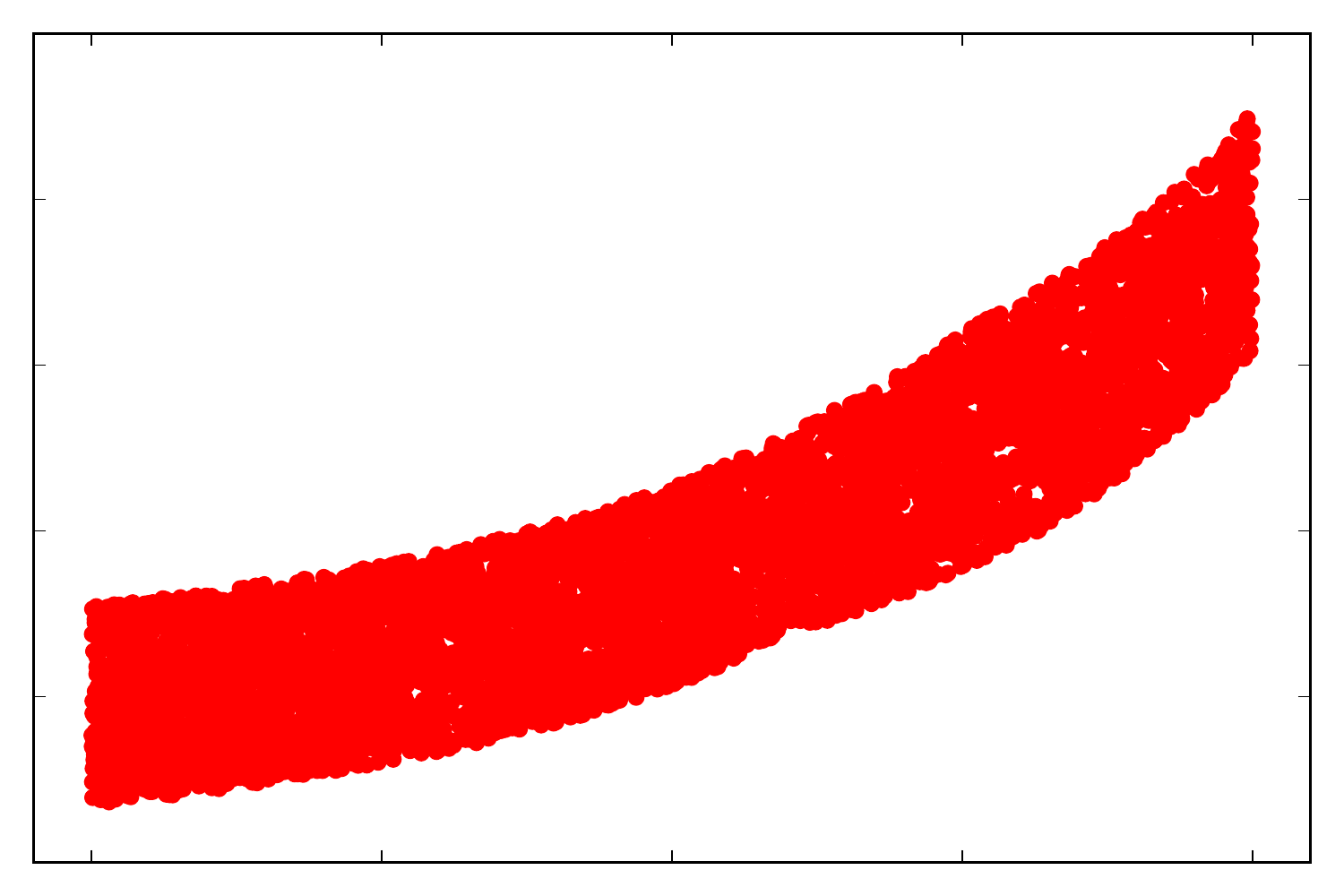}
    \includegraphics[width=0.238\linewidth]{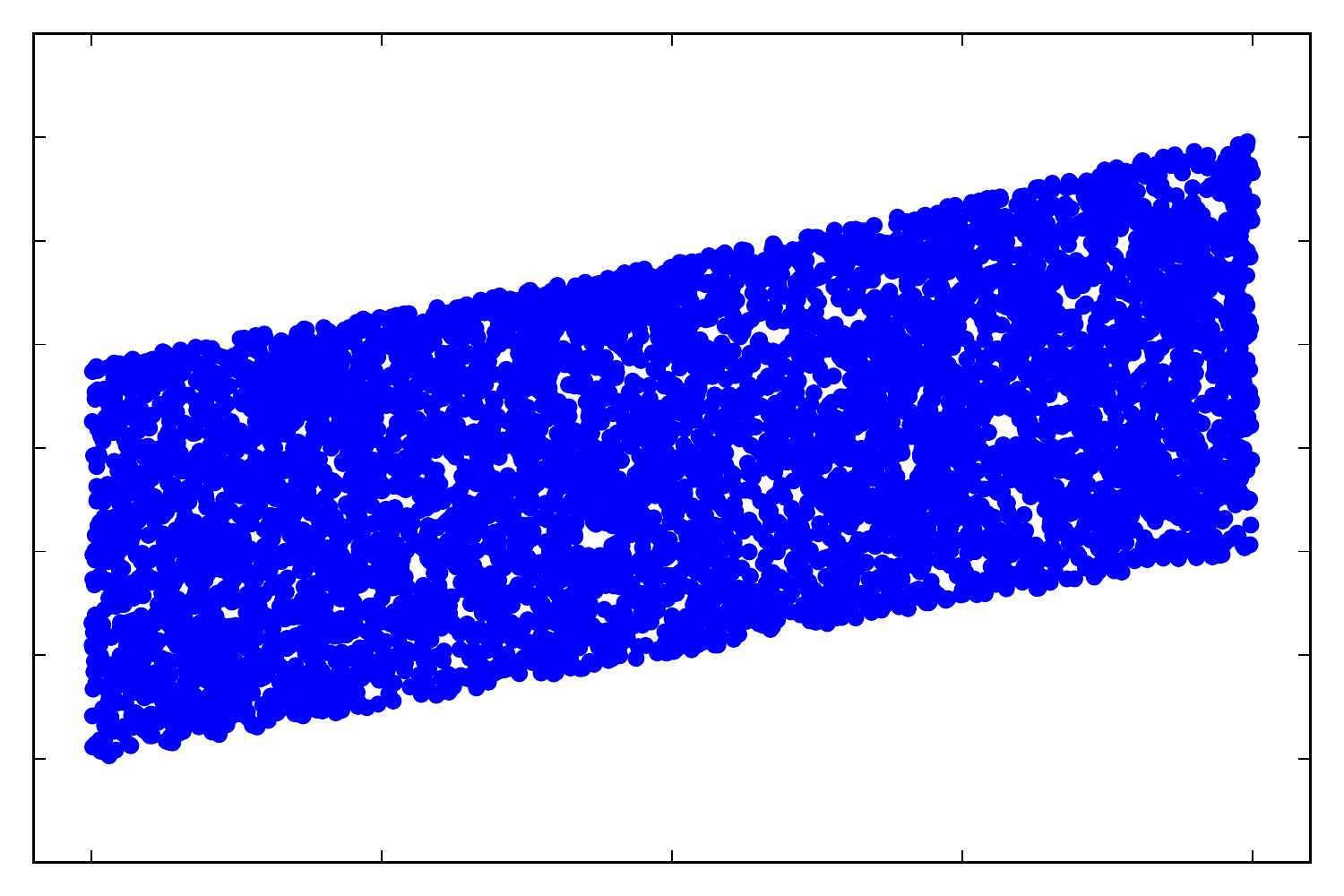}
    \includegraphics[width=0.238\linewidth]{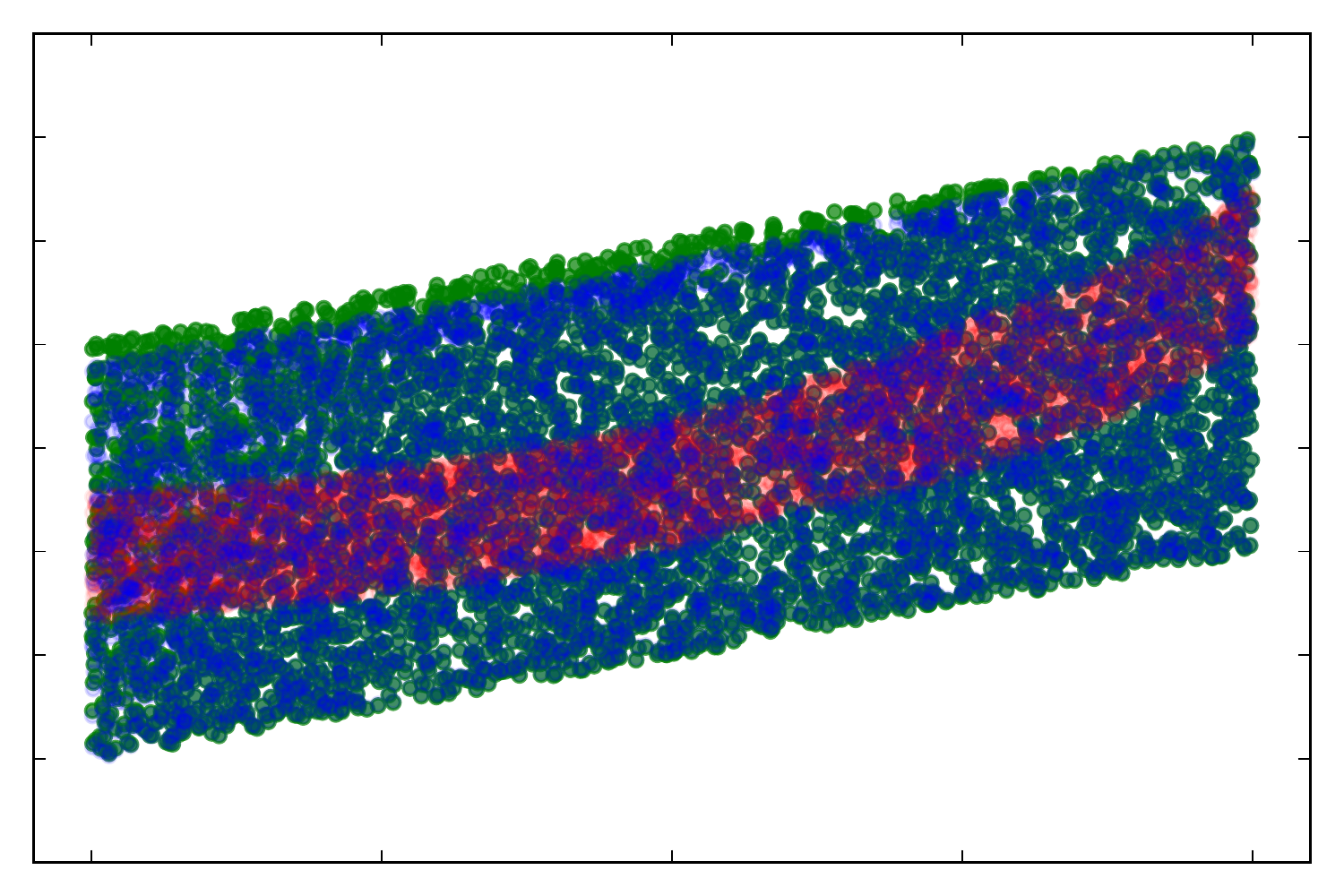}
\caption{Qualitative results in the synthetic experiment (main paper). Output of the $1^{st}$ function. From left to right: The target (ground-truth) curve in green, the output of the single pathway network (baseline) in red, the two pathway network in blue and all three overlaid. The output vector of the $1^{st}$ and the $3^{rd}$ functions are plotted here with respect to $x$, the full 3D plot is in the manuscript. All figures in this work are best viewed in color.}
\label{fig:rocgan_synthetic_exp_00}
\end{figure}

\begin{figure}[h!]
    \includegraphics[width=0.238\linewidth]{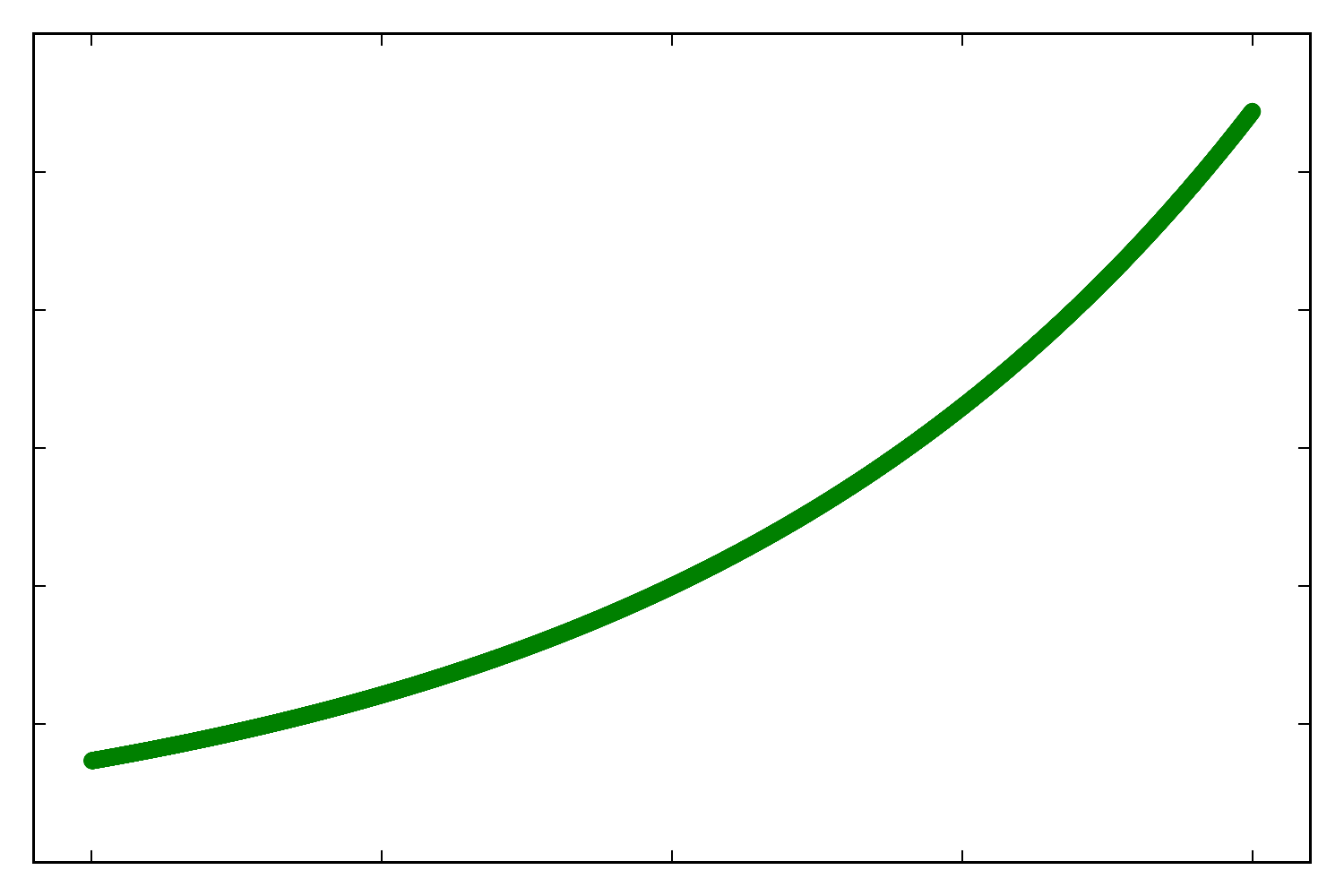}
    \includegraphics[width=0.238\linewidth]{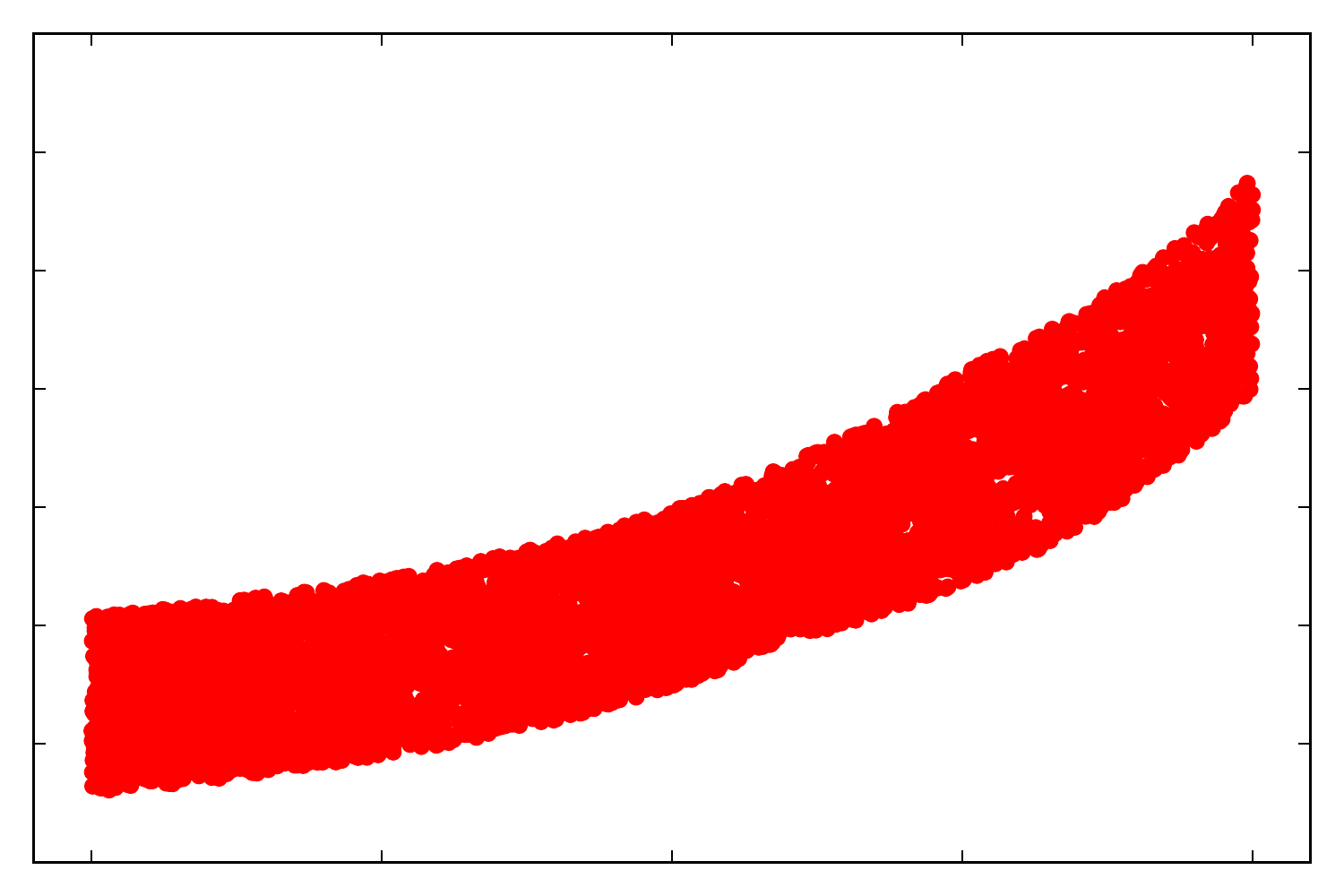}
    \includegraphics[width=0.238\linewidth]{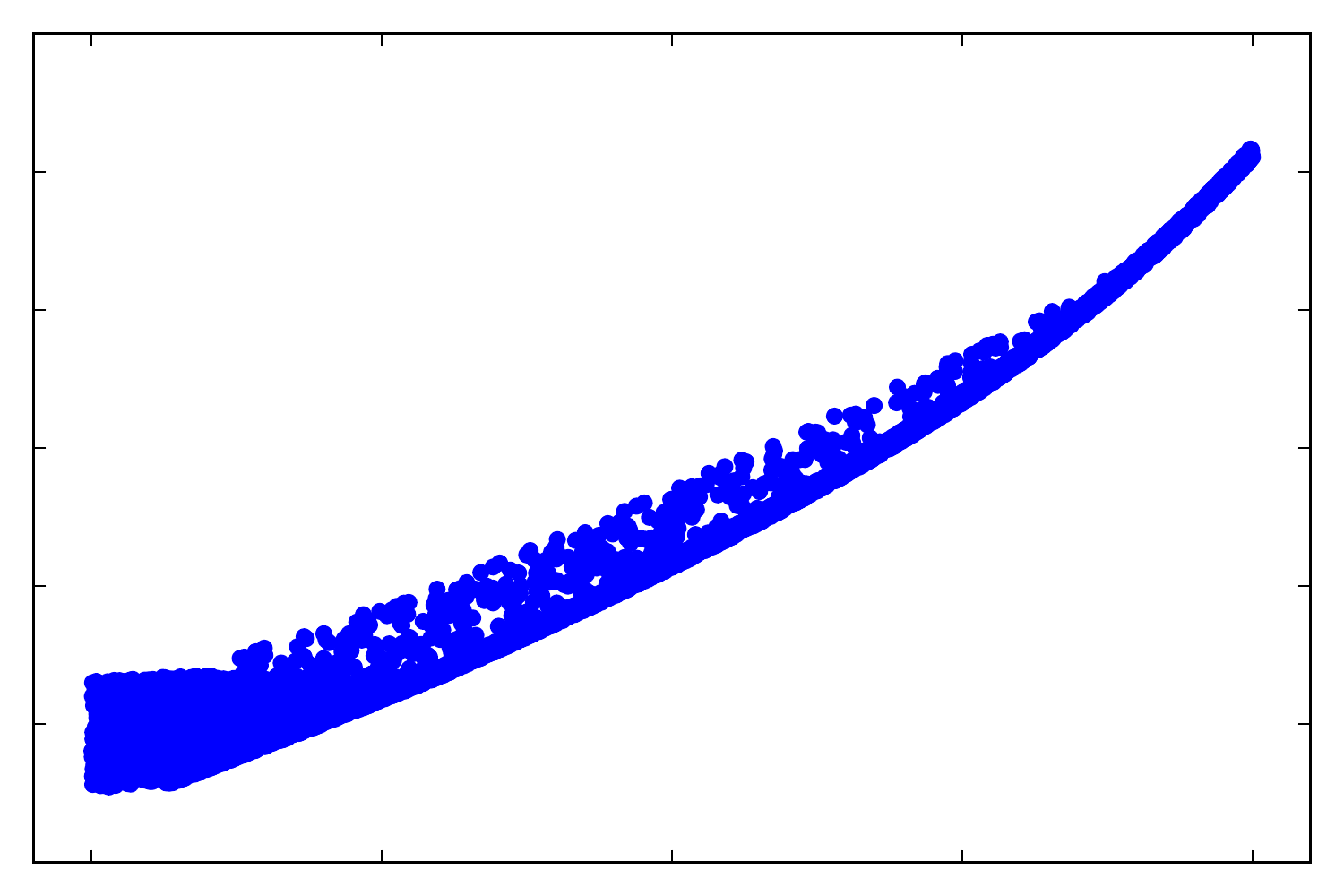}
    \includegraphics[width=0.238\linewidth]{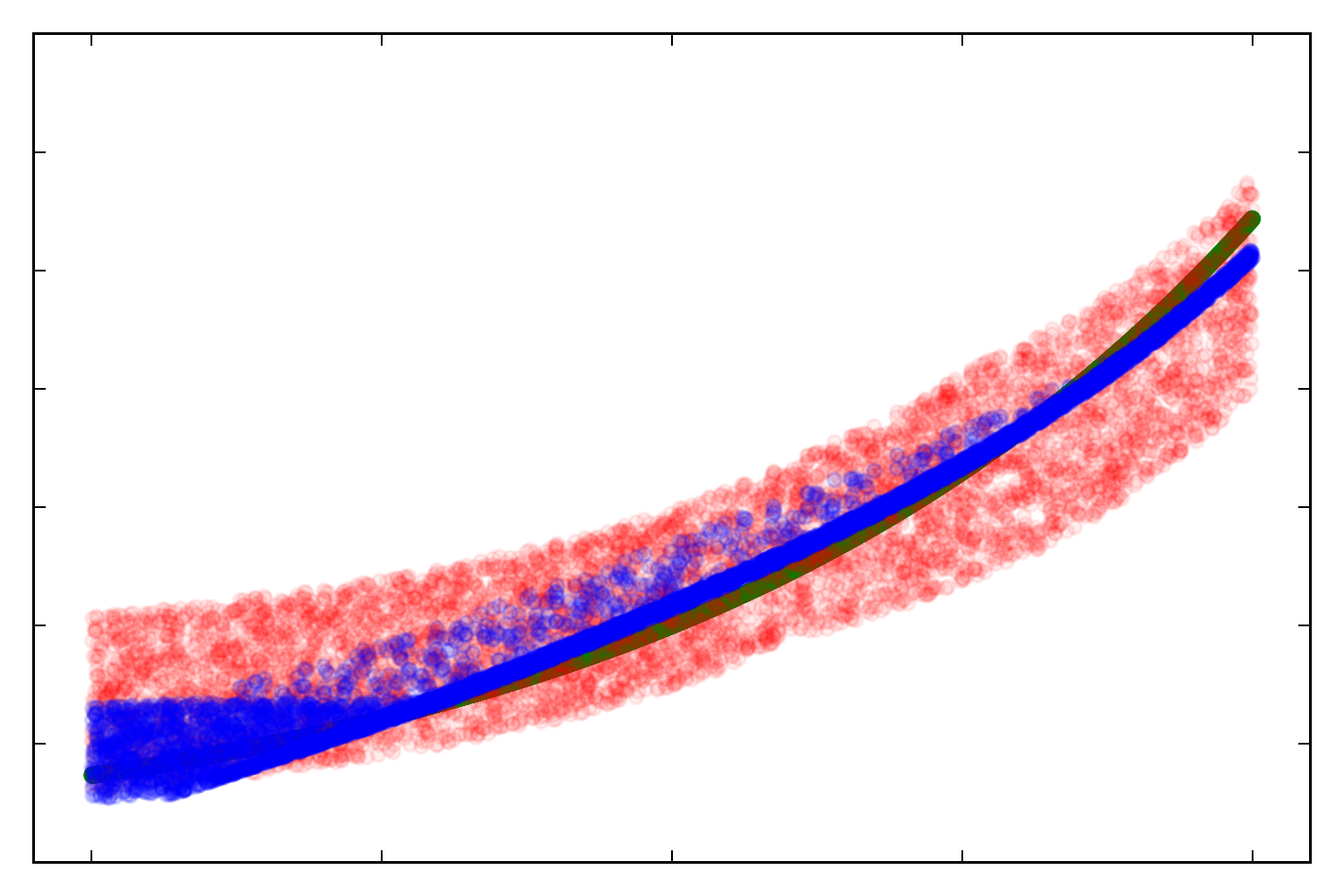}
\caption{Qualitative results in the synthetic experiment (main paper). Output of the $2^{nd}$ function. See Fig.~\ref{fig:rocgan_synthetic_exp_00} for details.}
\label{fig:rocgan_synthetic_exp_10}
\end{figure}
\begin{figure}[h!]
    \includegraphics[width=0.238\linewidth]{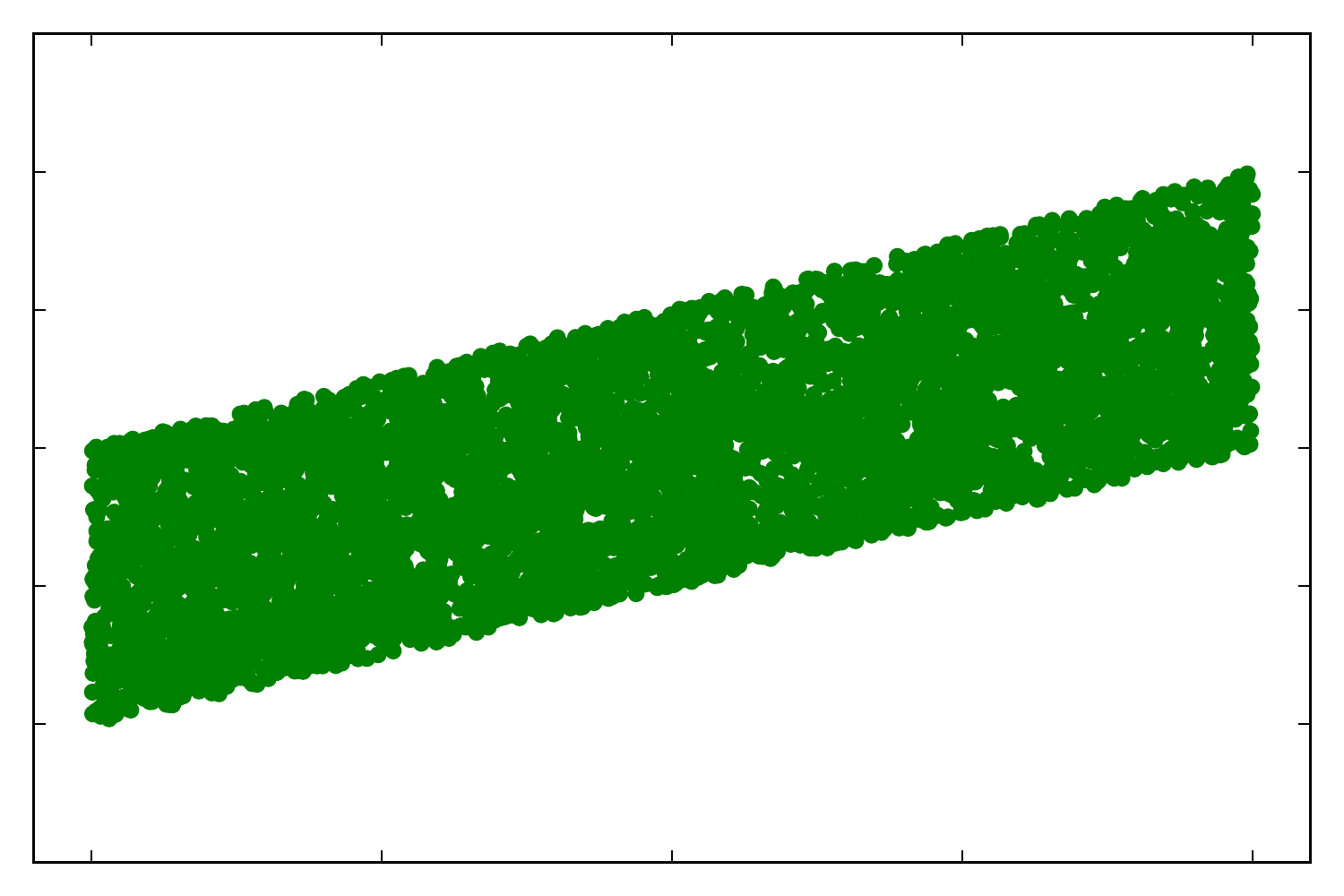}
    \includegraphics[width=0.238\linewidth]{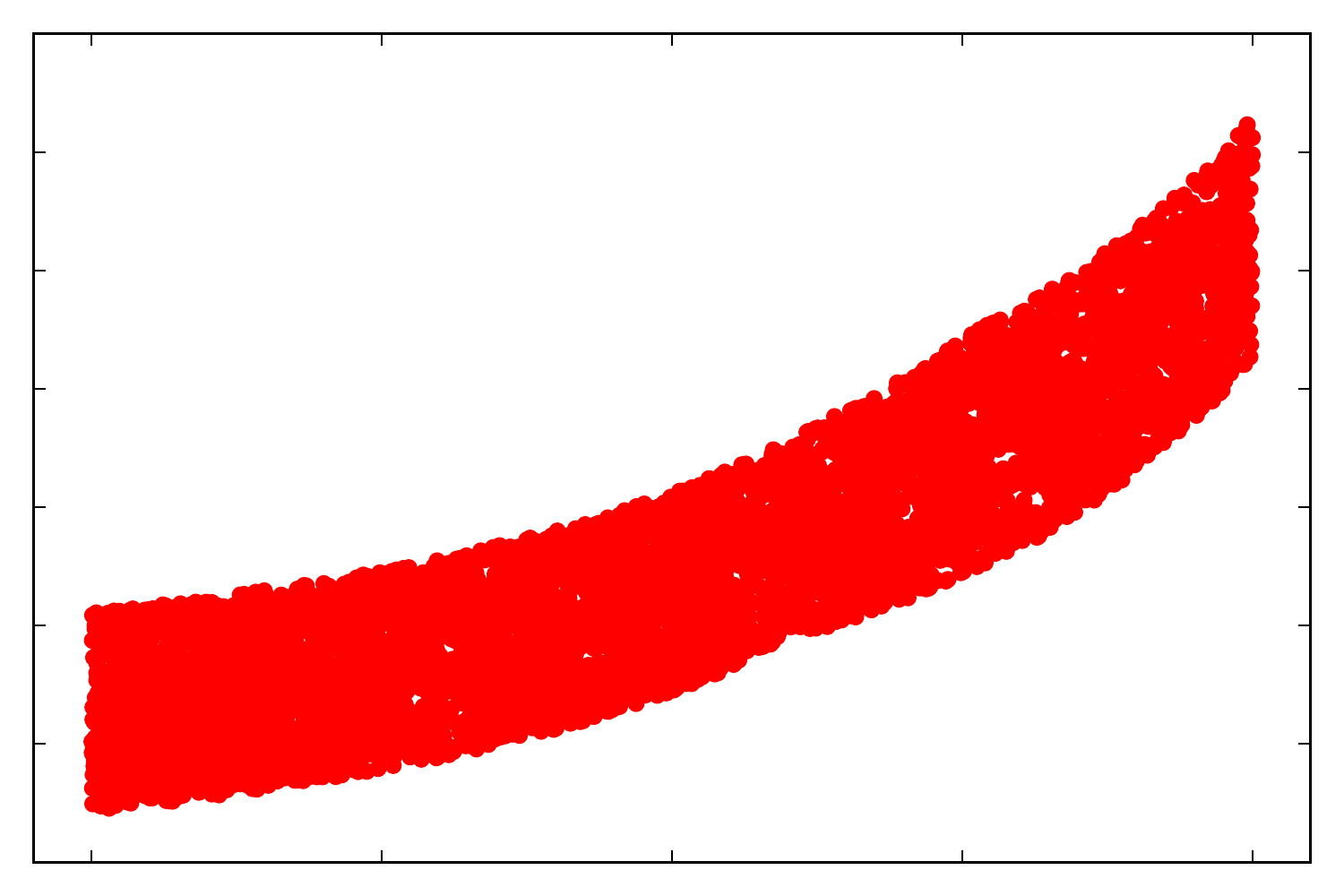}
    \includegraphics[width=0.238\linewidth]{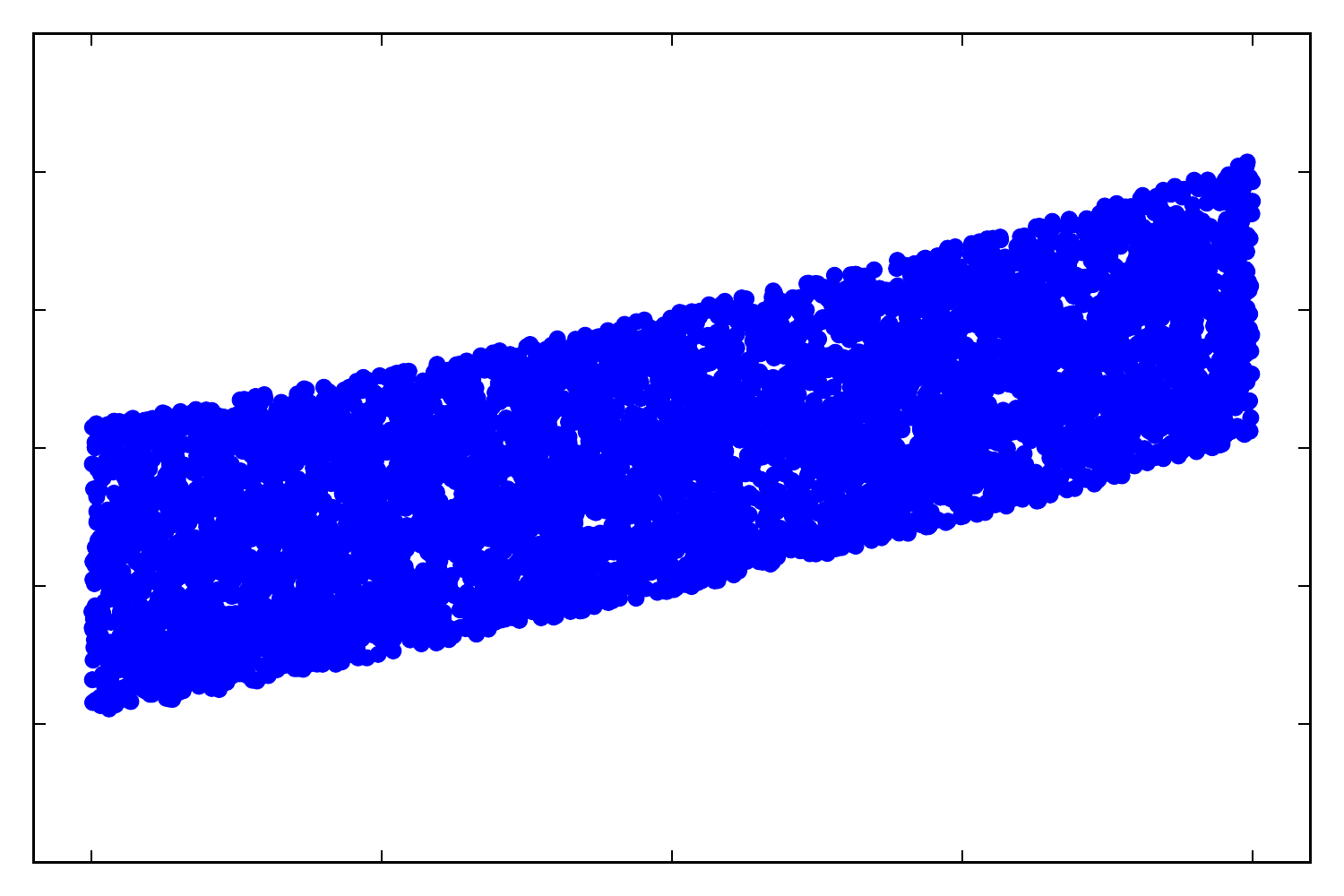}
    \includegraphics[width=0.238\linewidth]{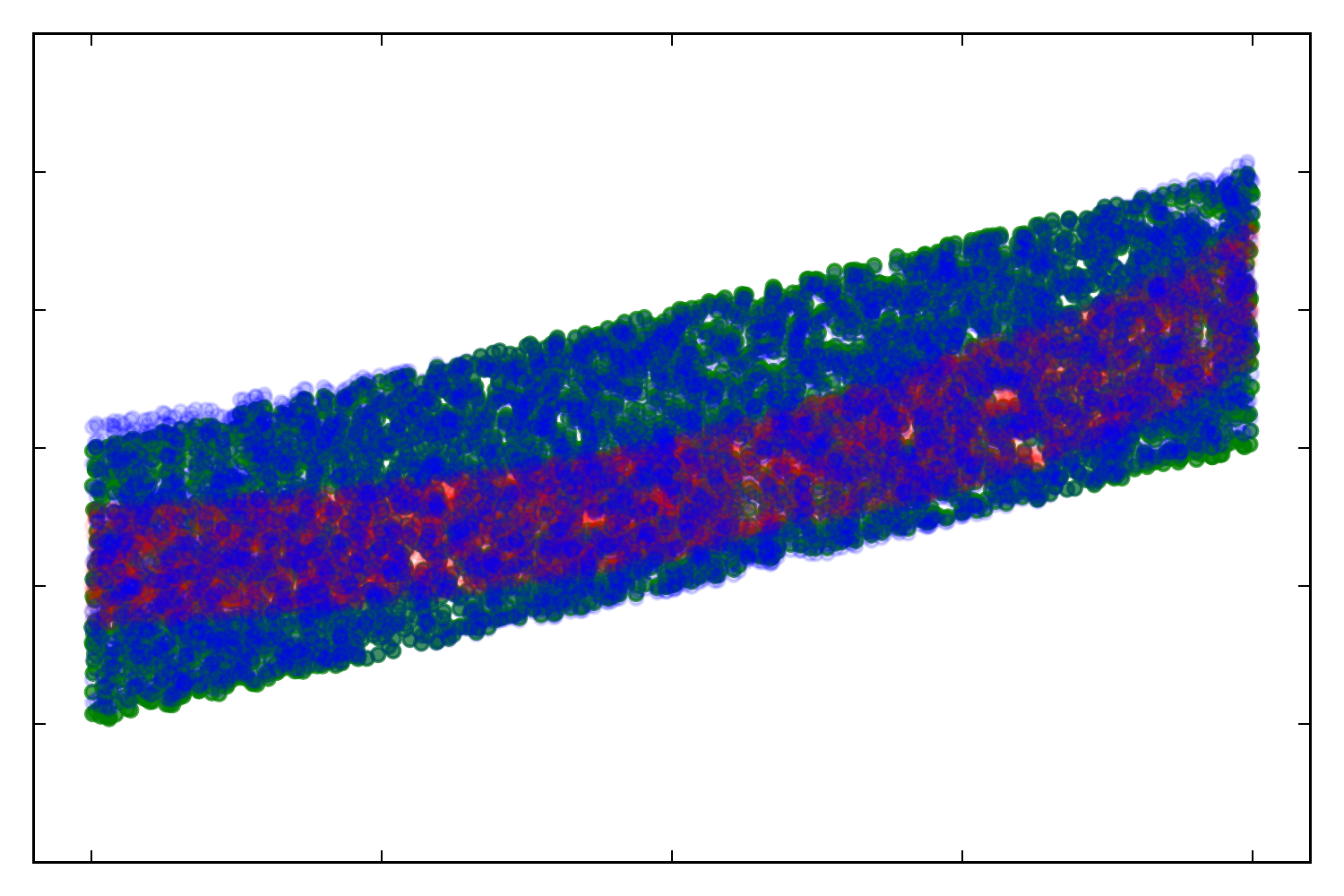}
\caption{Qualitative results in the synthetic experiment (main paper). Output of the $3^{rd}$ function. See Fig.~\ref{fig:rocgan_synthetic_exp_00} for details.}
\label{fig:rocgan_synthetic_exp_20}
\end{figure}
\begin{figure}[h!]
    \includegraphics[width=0.238\linewidth]{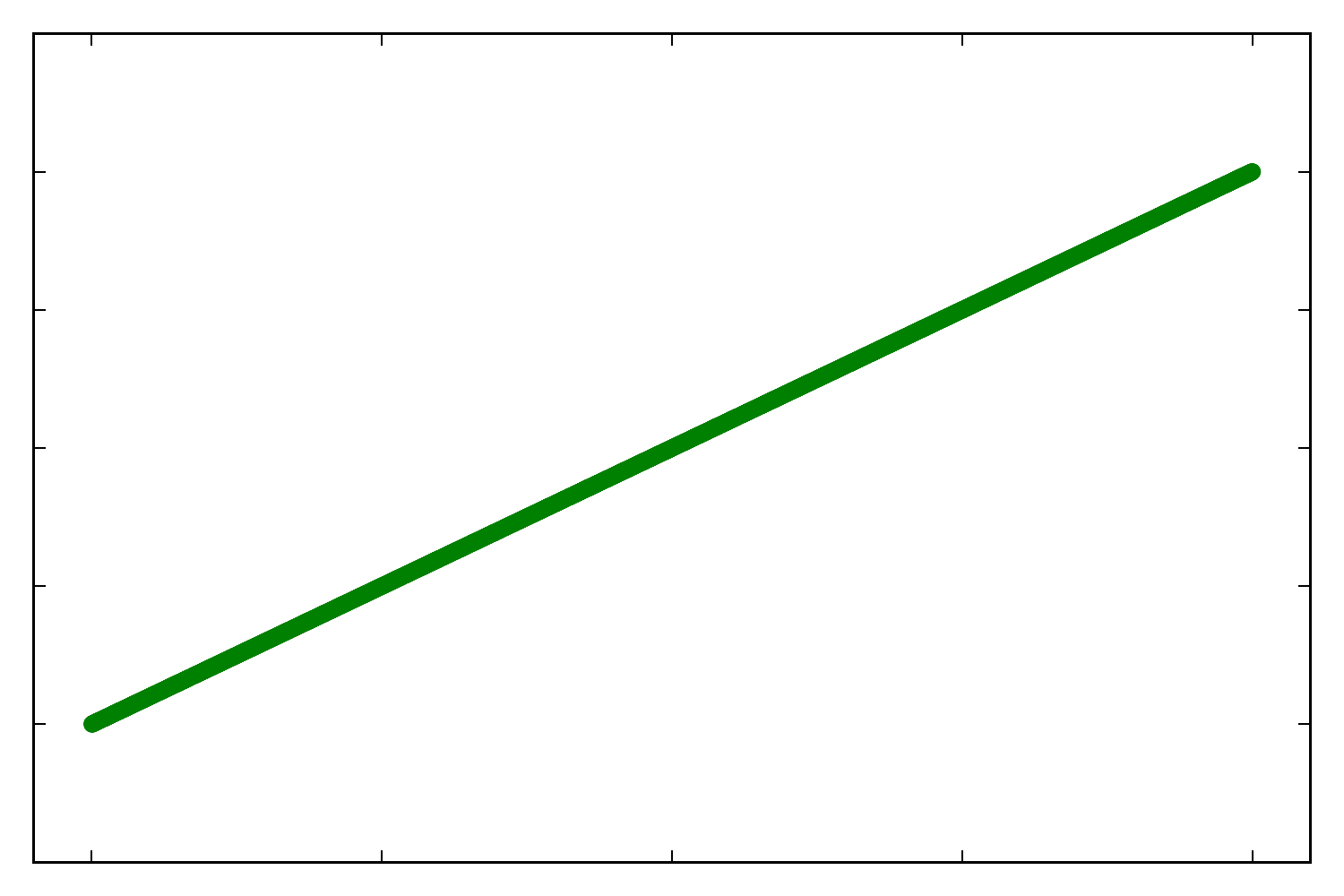}
    \includegraphics[width=0.238\linewidth]{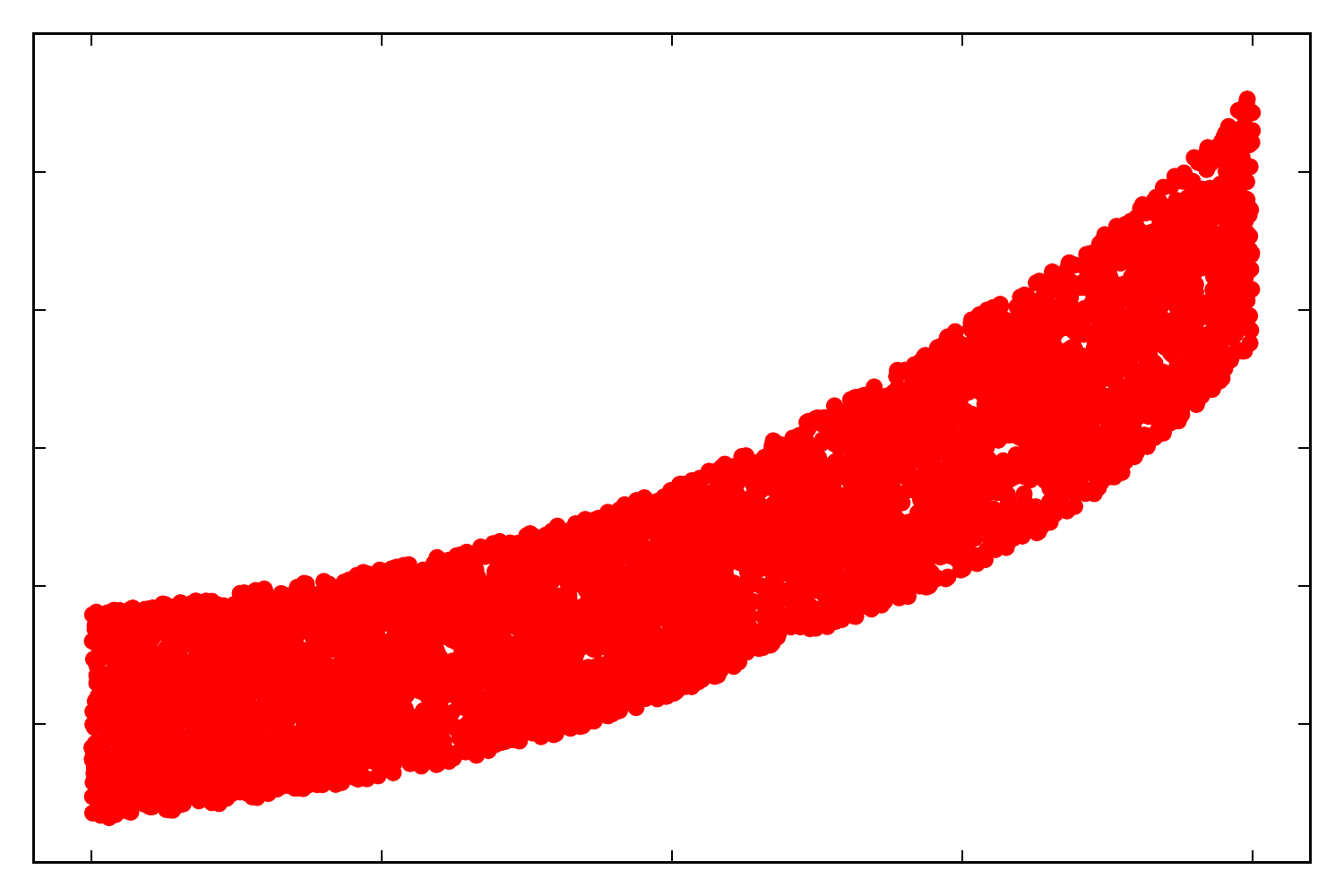}
    \includegraphics[width=0.238\linewidth]{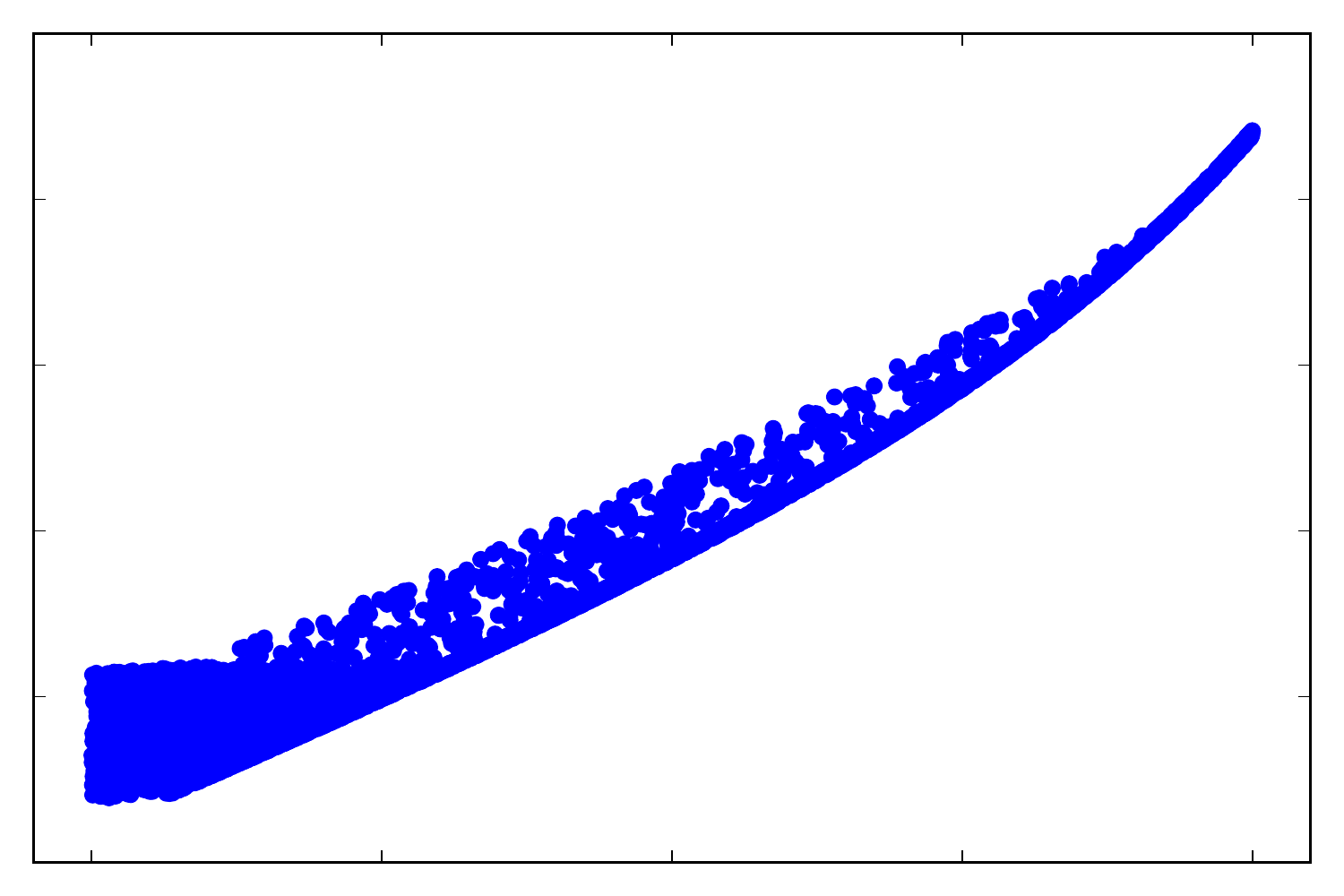}
    \includegraphics[width=0.238\linewidth]{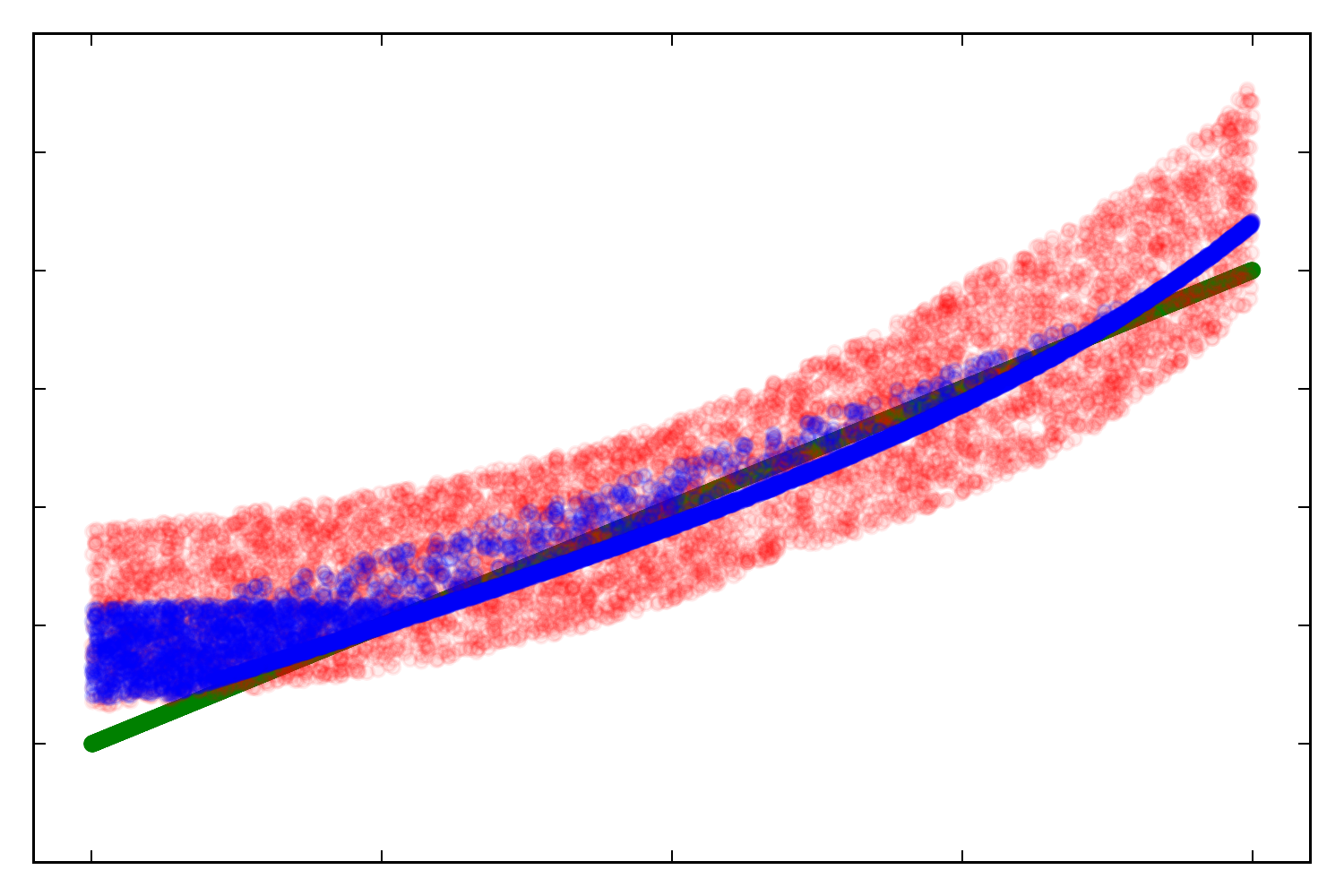}
\caption{Qualitative results in the synthetic experiment (main paper). Output of the $4^{th}$ function. See Fig.~\ref{fig:rocgan_synthetic_exp_00} for details.}
\label{fig:rocgan_synthetic_exp_30}
\end{figure}

\section{Linear generator analogy}
\label{sec:dense_reg_subsp_linear_analogy}
The exact nature and convergence properties of deep networks remain elusive~\citep{vidal2017mathematics}, however we can study the linear equivalent of deep methods to build on our intuition. To that end, we explore the linear equivalent of our method. Since the discriminator in \modelname{} can remain the same as in the baseline cGAN, we focus in the generators. To perform the analysis on the linear equivalent, we simply drop the piecewise non-linear units in the generators. 

We assume a network with two encoding and two decoding layers in this section; all layers include only linear units. The symbols $\bm{W}^{(G)}_{l}$ with $l \in [1, 4]$ are the $l^{th}$ layer's parameters (reg pathway). The linear autoencoder (AE) has a similar structure; $\bm{W}^{(AE)}_{l}$ denote the respective parameters for the AE.
We denote  with $\bm{X}$ the input signal, with $\bm{Y}$ the target signal and $\hat{\bm{Y}}$ the AE output, $\tilde{\bm{Y}}$ the regression output. Then:

\begin{equation}
    \hat{\bm{Y}} = \underbrace{\bm{W}^{(AE)}_{4} \bm{W}^{(AE)}_{3}}_{\bm{U}^{T}_{D}}   \underbrace{\bm{W}^{(AE)}_{2} \bm{W}^{(AE)}_{1}}_{\bm{U}_{E}}   \bm{Y}
    \label{eq:dense_reg_subsp_ae_matr}
\end{equation}

is the reconstruction of the autoencoder and

\begin{equation}
    \tilde{\bm{Y}} = \underbrace{\bm{W}^{(G)}_{4} \bm{W}^{(G)}_{3}}_{\bm{U}^{T}_{D, (G)}} \underbrace{\bm{W}^{(G)}_{2} \bm{W}^{(G)}_{1}}_{\bm{U}_{E, (G)}} \bm{X}
    \label{eq:dense_reg_subsp_reg_matr}
\end{equation}

is the regression of the generator (reg pathway). We define the auxiliary $\bm{U}^{T}_{D, (G)} = \bm{W}^{(G)}_{4} \bm{W}^{(G)}_{3}$, 
$\bm{U}_{E, (G)} = \bm{W}^{(G)}_{2} \bm{W}^{(G)}_{1}$, 
$\bm{U}^{T}_{D} = \bm{W}^{(AE)}_{4} \bm{W}^{(AE)}_{3}$ and 
$\bm{U}_{E} = \bm{W}^{(AE)}_{2} \bm{W}^{(AE)}_{1}$. Then Eq.~\ref{eq:dense_reg_subsp_ae_matr} and \ref{eq:dense_reg_subsp_reg_matr} can be written as: 

\begin{equation}
    \begin{cases}
        \tilde{\bm{Y}} = \bm{U}^{T}_{D, (G)}  \bm{U}_{E, (G)} \bm{X} \\
        \hat{\bm{Y}} = \bm{U}^{T}_{D}  \bm{U}_{E} \bm{Y}
    \end{cases}
\end{equation}

The AE approximates under mild condition robustly the target manifold of the data~\cite{bengio2013representation}. If we now define $\bm{U}_{D, (G)} = \bm{U}_{D}$, then the output of the generator $\tilde{\bm{Y}}$ spans the subspace of $\bm{U}_{D}$.

Given that $\bm{U}_{D, (G)} = \bm{U}_{D}$, we constrain the output of the generator to lie in the subspaces learned with the AE.

\begin{figure}
    \subfloat[original]{\includegraphics[width=0.192\linewidth]{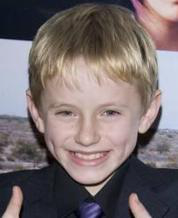}\hspace{0.1mm}}
    \subfloat[corrupted]{\includegraphics[width=0.192\linewidth]{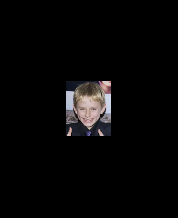}\hspace{0.1mm}}
    \subfloat[upscaled]{\includegraphics[width=0.192\linewidth]{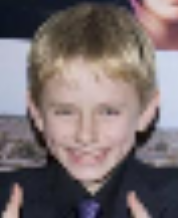}\hspace{0.1mm}}
    \subfloat[reconst-ups]{\includegraphics[width=0.192\linewidth]{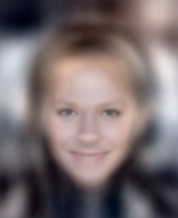}\hspace{0.1mm}}
    \subfloat[reconst-org]{\includegraphics[width=0.192\linewidth]{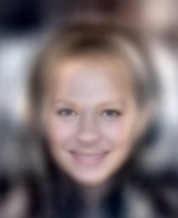}}
    \caption{Projection to linear subspace. The original image on the left was corrupted (downscaled $\times4$). The image was upscaled with bi-linear interpolation in (c). Both the original and the upscaled images are projected and reconstructed from a PCA. The reconstruction of the original image (`reconst-org') along with the linearly upscaled (`reconst-ups') image are similar. The simple linear projection demonstrates how constraining the output through a linear subspace can result in a more robust output.}
    \label{fig:rocgan_linear_analogy_suppl0}
\end{figure}

To illustrate how a projection to a target subspace can contribute to constraining the image, the following visual example is designed. We learn a PCA model using one hundred thousand images from MS-Celeb; we do not apply any pose normalization or alignment (out of the paper's scope). We maintain 90\% of the variance. In Fig.~\ref{fig:rocgan_linear_analogy_suppl0} we sample a random image from Celeb-A and downscale it\footnote{Downscaling is used as a simple corruption. Other corruptions, e.g. gaussian blurring, have similar results.}; we use bi-linear interpolation to upscale it to the original dimensions. We project and reconstruct both the original and the upscaled versions; note that the output images are similar. This similarity illustrates how the linear projection forces both images to span the same subspace.

 \section{Theoretical analysis}
\label{sec:dense_reg_subsp_detailed_theoretical_analysis}
In the next few paragraphs, we prove that \modelname{} share the properties of the original GAN~\citep{goodfellow2014generative}. We derive the optimal discriminator and then compute the optimal value of $\mathcal{L}_{adv}(\bm{G}, \bm{D})$.

\begin{proposition}
If we fix the generator $\bm{G}$ (reg pathway), the optimal discriminator is:

\begin{equation}
    \bm{D}^{*} = \frac{p_{d}(\bm{s}, \bm{y})}{p_{d}(\bm{s}, \bm{y}) + p_{g}(\bm{s}, \bm{y})}
\end{equation}
\label{prop:rocgan_optimal_discriminator}
\end{proposition}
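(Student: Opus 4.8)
The plan is to follow the classical optimal-discriminator argument from the original GAN analysis~\citep{goodfellow2014generative}, adapted to the conditional setting where the joint distributions over $(\bm{s}, \bm{y})$ play the role that the marginals play in the unconditional case. First I would rewrite $\mathcal{L}_{adv}(\bm{G}, \bm{D})$ as a single integral over the product space of source and target signals. The first term is already an expectation against the data joint $p_{d}(\bm{s}, \bm{y})$; for the second term I would introduce $p_{g}(\bm{s}, \bm{y})$ as the joint distribution induced by the generator, i.e. the pushforward of $p_{d}(\bm{s})\, p_{z}(\bm{z})$ under the map $(\bm{s}, \bm{z}) \mapsto (\bm{s}, \bm{G}(\bm{s}, \bm{z}))$. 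The change of variables replaces the expectation over $\bm{z}$ with an expectation over $\bm{y}$ weighted by $p_{g}$, yielding
\begin{equation}
\mathcal{L}_{adv}(\bm{G}, \bm{D}) = \int_{\bm{s}} \int_{\bm{y}} \Big[ p_{d}(\bm{s}, \bm{y}) \log \bm{D}(\bm{y} \mid \bm{s}) + p_{g}(\bm{s}, \bm{y}) \log\big(1 - \bm{D}(\bm{y} \mid \bm{s})\big) \Big] \, d\bm{y} \, d\bm{s}.
\end{equation}

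Next I would argue that, because $\bm{D}$ is an arbitrary measurable function entering only through its pointwise value $\bm{D}(\bm{y} \mid \bm{s})$ at each $(\bm{s}, \bm{y})$, maximizing the integral is equivalent to maximizing the integrand separately at each point. Fixing $(\bm{s}, \bm{y})$ and writing $a = p_{d}(\bm{s}, \bm{y})$, $b = p_{g}(\bm{s}, \bm{y})$, $t = \bm{D}(\bm{y} \mid \bm{s}) \in [0,1]$, the integrand has the form $a \log t + b \log(1-t)$. Differentiating with respect to $t$ and setting the derivative to zero gives $a/t = b/(1-t)$, hence $t = a/(a+b)$; since the second derivative $-a/t^{2} - b/(1-t)^{2}$ is negative, this critical point is the unique maximizer on $(0,1)$ whenever $(a,b) \neq (0,0)$. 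Substituting back recovers
\begin{equation}
\bm{D}^{*} = \frac{p_{d}(\bm{s}, \bm{y})}{p_{d}(\bm{s}, \bm{y}) + p_{g}(\bm{s}, \bm{y})},
\end{equation}
which is the claimed expression.

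The main obstacle I anticipate is not the calculus but the justification of the two interchange steps: (i) expressing the generator term as an integral against a well-defined joint density $p_{g}(\bm{s}, \bm{y})$ through the pushforward, which implicitly assumes the induced distribution is absolutely continuous so that the density exists, and (ii) swapping the maximization over the function $\bm{D}$ with the integral, i.e. justifying the pointwise optimization. The latter is clean provided $\bm{D}$ ranges over all measurable functions into $[0,1]$ rather than a restricted parametric family, so I would state this as a standing assumption, mirroring the original GAN derivation. A minor point to handle is the behaviour on the set where both $p_{d}$ and $p_{g}$ vanish, where the value of $\bm{D}^{*}$ is immaterial since that region contributes nothing to the integral. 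Note also that the AE pathway and the extra loss terms $\mathcal{L}_{AE}$ and $\mathcal{L}_{lat}$ do not enter $\mathcal{L}_{adv}$, so the discriminator analysis is unaffected by them and the result is formally identical to the cGAN case.
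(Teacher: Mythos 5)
Your proposal is correct and follows essentially the same route as the paper's proof: rewrite the generator expectation as an integral against the induced joint density $p_{g}(\bm{s},\bm{y})$, then maximize the integrand $a\log t + b\log(1-t)$ pointwise to obtain $\bm{D}^{*} = p_{d}/(p_{d}+p_{g})$. The only difference is that you make explicit the steps the paper treats tersely (the pushforward justification, the calculus verification of the maximizer, and the degenerate set where both densities vanish), which strengthens rather than changes the argument.
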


where $p_{g}$ is the model (generator) distribution. 

\begin{proof}
Since the generator is fixed, the goal of the discriminator is to maximize the $\mathcal{L}_{adv}$ where:
\begin{equation}
\begin{split} 
    \mathcal{L}_{adv}(\bm{G}, \bm{D}) = \int_{\bm{y}} \int_{\bm{s}} p_{d}(\bm{y}, \bm{s})  \log \bm{D}(\bm{y} | \bm{s}) \bm{dy} \bm{ds}  + \int_{\bm{s}} \int_{\bm{z}} p_{d}(\bm{s}) p_{z}(\bm{z}) \log (1-\bm{D}(\bm{G}(\bm{s},\bm{z}) | \bm{s})) \bm{ds} \bm{dz} = \\
    \int_{\bm{y}} \int_{\bm{s}} p_{d}(\bm{s}, \bm{y})  \log \bm{D}(\bm{y} | \bm{s}) \bm{dy} +  p_{g}(\bm{s}, \bm{y})  \log (1 - \bm{D}(\bm{y} | \bm{s})) \bm{dy} \bm{ds}
\end{split}
\end{equation}

To maximize the $\mathcal{L}_{adv}$, we need to optimize the integrand above. We note that with respect to $\bm{D}$ the integrand has the form $f(y) = a \cdot log(y) + b \cdot log(1 - y)$. The function $f$ for $a, b \in (0, 1)$ as in our case, obtains a global maximum in $\frac{a}{a + b}$, so:
\begin{equation}
    \mathcal{L}_{adv}(\bm{G}, \bm{D}) \leq \int_{\bm{y}} \int_{\bm{s}} p_{d}(\bm{s}, \bm{y})  \log \bm{D}^{*}(\bm{y} | \bm{s}) \bm{dy} +  p_{g}(\bm{s}, \bm{y})  \log (1 - \bm{D}^{*}(\bm{y} | \bm{s})) \bm{dy} \bm{ds}
\end{equation}

with 

\begin{equation}
    \bm{D}^{*} = \frac{p_{d}(\bm{s}, \bm{y})}{p_{d}(\bm{s}, \bm{y}) + p_{g}(\bm{s}, \bm{y})} 
\end{equation}

thus $\mathcal{L}_{adv}$ obtains the maximum with $\bm{D}^{*}$. 
\end{proof}

\begin{proposition}
    Given the optimal discriminator $\bm{D}^{*}$ the global minimum of $\mathcal{L}_{adv}$ is reached if and only if $p_{g} = p_{d}$, i.e. when the model (generator) distribution matches the data distribution.
\end{proposition}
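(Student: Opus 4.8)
The plan is to follow the classical two-step argument of \citet{goodfellow2014generative}, adapted to the conditional (joint-distribution) setting. First I would substitute the optimal discriminator $\bm{D}^{*}$ from Proposition~\ref{prop:rocgan_optimal_discriminator} back into $\mathcal{L}_{adv}$ to obtain a ``virtual'' objective $C(\bm{G})$ that depends on the generator alone. Writing the adversarial loss as an integral against the joint densities $p_{d}(\bm{s}, \bm{y})$ and $p_{g}(\bm{s}, \bm{y})$ and inserting $\bm{D}^{*} = p_{d}/(p_{d} + p_{g})$, this yields
\begin{equation}
C(\bm{G}) = \mathbb{E}_{(\bm{s}, \bm{y}) \sim p_{d}}\!\left[\log \frac{p_{d}(\bm{s}, \bm{y})}{p_{d}(\bm{s}, \bm{y}) + p_{g}(\bm{s}, \bm{y})}\right] + \mathbb{E}_{(\bm{s}, \bm{y}) \sim p_{g}}\!\left[\log \frac{p_{g}(\bm{s}, \bm{y})}{p_{d}(\bm{s}, \bm{y}) + p_{g}(\bm{s}, \bm{y})}\right].
\nonumber
\end{equation}

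Next I would establish the candidate minimum value by inspection: when $p_{g} = p_{d}$, the optimal discriminator reduces to $\bm{D}^{*} = \tfrac{1}{2}$ everywhere, so each integrand becomes $\log \tfrac{1}{2}$ and hence $C(\bm{G}) = -\log 4$. To show this is genuinely the global minimum, the key algebraic step is to add and subtract $\log 2$ inside each expectation, rewriting
\begin{equation}
C(\bm{G}) = -\log 4 + \KL\!\left(p_{d} \,\Big\|\, \frac{p_{d} + p_{g}}{2}\right) + \KL\!\left(p_{g} \,\Big\|\, \frac{p_{d} + p_{g}}{2}\right),
\nonumber
\end{equation}
where both Kullback--Leibler terms are taken over the joint variable $(\bm{s}, \bm{y})$. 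The sum of the two KL terms is precisely twice the Jensen--Shannon divergence $\mathrm{JSD}(p_{d} \,\|\, p_{g})$ between the joint data and model distributions.

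Finally I would invoke the non-negativity of the KL divergence (Gibbs' inequality), which gives $C(\bm{G}) \geq -\log 4$, with equality if and only if both KL terms vanish. Since $\KL(p \,\|\, q) = 0$ exactly when $p = q$ almost everywhere, equality forces $p_{d} = (p_{d} + p_{g})/2 = p_{g}$, establishing both directions of the claim. The main obstacle is the bookkeeping in the middle step: one must correctly match the $\log 2$ constants so that the remainder assembles into two honest KL divergences sharing the common mixture $(p_{d} + p_{g})/2$ as the second argument. Once the expression is in Jensen--Shannon form, the ``if and only if'' conclusion follows immediately from the standard characterization of when the divergence is zero.
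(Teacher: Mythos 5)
Your proposal is correct and follows essentially the same route as the paper's proof: substituting $\bm{D}^{*}$ into $\mathcal{L}_{adv}$, adding and subtracting $\log 2$ to assemble the two Kullback--Leibler terms against the mixture $(p_{d}+p_{g})/2$, identifying the sum with the Jensen--Shannon divergence of the joint distributions, and concluding from its non-negativity and vanishing characterization. The only cosmetic difference is that you verify the candidate value $-\log 4$ by inspection before the decomposition, which the paper leaves implicit.
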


\begin{proof}
From proposition~\ref{prop:rocgan_optimal_discriminator}, we have found the optimal discriminator as $\bm{D}^{*}$, i.e. the $\argmax_{\bm{D}} \mathcal{L}_{adv}$. If we replace the optimal value we obtain:

\begin{equation}
\begin{split}
    \max_{\bm{D}} \mathcal{L}_{adv}(\bm{G}, \bm{D}) = \int_{\bm{y}} \int_{\bm{s}} p_{d}(\bm{s}, \bm{y})  \log \bm{D}(\bm{y} | \bm{s}) \bm{dy} +  p_{g}(\bm{s}, \bm{y})  \log (1 - \bm{D}(\bm{y} | \bm{s})) \bm{dy} \bm{ds} = \\
    \int_{\bm{y}} \int_{\bm{s}} p_{d}(\bm{s}, \bm{y})  \log (\frac{p_{d}(\bm{s}, \bm{y})}{p_{d}(\bm{s}, \bm{y}) + p_{g}(\bm{s}, \bm{y})}) +  p_{g}(\bm{s}, \bm{y})  \log (1 - \frac{p_{d}(\bm{s}, \bm{y})}{p_{d}(\bm{s}, \bm{y}) + p_{g}(\bm{s}, \bm{y})}) \bm{dy} \bm{ds} = \\
    \int_{\bm{y}} \int_{\bm{s}} p_{d}(\bm{s}, \bm{y})  \log (\frac{p_{d}(\bm{s}, \bm{y})}{p_{d}(\bm{s}, \bm{y}) + p_{g}(\bm{s}, \bm{y})}) +  p_{g}(\bm{s}, \bm{y})  \log (\frac{p_{g}(\bm{s}, \bm{y})}{p_{d}(\bm{s}, \bm{y}) + p_{g}(\bm{s}, \bm{y})}) \bm{dy} \bm{ds}
\end{split}
\end{equation}

We add and subtract $\log(2)$ from both terms, which after few math operations provides: 
\begin{equation}
\begin{split}
    \max_{\bm{D}} \mathcal{L}_{adv}(\bm{G}, \bm{D}) = - \int_{\bm{y}} \int_{\bm{s}} (p_{d}(\bm{s}, \bm{y}) + p_{g}(\bm{s}, \bm{y})) \log(2) \bm{dy} \bm{ds} +    \\
    \int_{\bm{y}} \int_{\bm{s}} (p_{d}(\bm{s}, \bm{y}) \log \frac{\frac{p_{d}(\bm{s}, \bm{y}) + p_{g}(\bm{s}, \bm{y})}{2}}{p_{d}(\bm{s}, \bm{y})} +    p_{g}(\bm{s}, \bm{y}) \log \frac{\frac{p_{d}(\bm{s}, \bm{y}) + p_{g}(\bm{s}, \bm{y})}{2}}{p_{g}(\bm{s}, \bm{y})} )  \bm{dy} \bm{ds} = \\
    -2\cdot \log(2) + KL(p_{d} || \frac{p_{d} + p_{g}}{2}) + KL(p_{g} || \frac{p_{d} + p_{g}}{2})
\end{split}
\end{equation}

where in the last row KL symbolizes the Kullback–Leibler divergence. The latter one can be rewritten more conveniently with the help of the Jensen–Shannon (JSD) divergence as 

\begin{equation}
    \max_{\bm{D}} \mathcal{L}_{adv}(\bm{G}, \bm{D}) = -\log(4) + 2\cdot JSD(p_{d} ||p_{g})
\end{equation}

The Jensen–Shannon divergence is non-negative and obtains the zero value only if $p_{d} = p_{g}$. Equivalently, $\max_{\bm{D}} \mathcal{L}_{adv}(\bm{G}, \bm{D}) \geq -\log(4)$ and has a global minimum (under the constraint that the discriminator is optimal) when  $p_{d} = p_{g}$.

\end{proof}

\section{Additional experiments}
\label{sec:dense_reg_subsp_experimental_details}
In this section, we describe additional experimental results and details.

In addition to the SSIM metric, we use the $\ell_1$ loss to measure the loss in the experiments of the main paper. The results in table~\ref{tab:dense_reg_subsp_core_experimental_results_l1} confirm that \modelname{} outperform both compared methods. The larger difference in the cases of more intense noise demonstrates that our model is indeed robust to additional cases not trained on. Additional visualizations are provided in Fig.~\ref{fig:dense_reg_subsp_qualitative_results_suppl}.

\begin{table}[htb]
\centering
\begin{tabular}{p{3cm} r c|c||c|c@{\hskip 15pt} c|c||c|c}  
\toprule
  \multirow{4}{*}{\backslashbox{\emph{Method}}{\emph{Obj. / Task}}} & & \multicolumn{4}{c}{Faces} &  \multicolumn{4}{c}{Natural Scenes} \\
  \cmidrule(lr){3-6} \cmidrule(lr){7-10} 
  & &  \multicolumn{2}{c}{Denoising} &  \multicolumn{2}{c}{Sparse Inpaint.}  &  \multicolumn{2}{c}{Denoising} &  \multicolumn{2}{c}{Sparse Inpaint.} \\
   \cmidrule{3 - 10}
   & & $25\%$ & $35\%$ & $50\%$ & $75\%$ &  $25\%$ & $35\%$ & $50\%$ & $75\%$ \\
  \cmidrule[\heavyrulewidth](){1 - 10}
    \cite{rick2017one} &    &  701.6 &   801.6 &  838.3  & 876.1 &       1005.9 & 1083.4 & 1207.1 & 1401.1 \\
          Baseline-4layer &    &  652.7 &   728.4 &  618.8  & 846.5 &       983.5 & 1057.7 & 949.4 &  1352.6 \\
          Ours-4layer  &    &  590.3 &   632.1 &  616.1  & 830.0 &       918.8 & 968.9 &  939.4 &  1351.9 \\
\bottomrule
\end{tabular}
\caption{Quantitative results in the `4layer' network in both faces and natural scenes cases. In this table, the $\ell_1$ loss is reported. In each task, the leftmost evaluation is the one with corruptions similar to the training, while the one on the right consists of samples with additional corruptions, e.g. in denoising $35\%$ of the pixels are dropped.} 
\label{tab:dense_reg_subsp_core_experimental_results_l1}
\end{table}

\begin{table}[ht]
\centering
\begin{tabular}{p{3cm} r c|c||c|c}  
\toprule
  \multirow{3}{*}{\backslashbox{\emph{Method}}{\emph{Task}}} & &  \multicolumn{2}{c}{Denoising} &  \multicolumn{2}{c}{Sparse Inpaint.}  \\
  \cmidrule{3 - 6}
  & & $25\%$ & $35\%$ & $50\%$ & $75\%$ \\
  \cmidrule[\heavyrulewidth](){1 - 6}
          Baseline-4layer &    &  0.657 &   0.643 &  0.629  & 0.514 \\
          Ours-4layer     &    &  0.679 &   0.666 &  0.645  & 0.534  \\
    \cmidrule{1 - 6}
            AAE           &    & \multicolumn{4}{c}{0.700} \\
\bottomrule
\end{tabular}
\caption{Quantitative results for Imagenet (sec.~\ref{ssec:rocgan_experiment_imagenet}).} 
\label{tab:rocgan_imagenet_quantitative}
\end{table}

\begin{table}[ht]
\centering
\begin{tabular}{p{3cm} r c|c||c|c}  
\toprule
  \multirow{3}{*}{\backslashbox{\emph{Method}}{\emph{Task}}} & &  \multicolumn{2}{c}{Denoising} &  \multicolumn{2}{c}{Sparse Inpaint.}  \\
  \cmidrule{3 - 6}
  & & $25\%$ & $35\%$ & $50\%$ & $75\%$ \\
  \cmidrule[\heavyrulewidth](){1 - 6}
          Baseline-5layer &    &  0.851 &   0.826 &  0.819  & 0.707 \\
          Ours-5layer     &    &  0.890 &   0.884 &  0.873  & 0.818  \\
    \cmidrule{1 - 6}
          Baseline-6layer &    &  0.859 &   0.843 &  0.816  & 0.727 \\
          Ours-6layer     &    &  0.881 &   0.865 &  0.882  & 0.822  \\
    \cmidrule{1 - 6}
          Baseline-4layer-skip &    & 0.885 &   0.863 &  0.855  & 0.726 \\
          Ours-4layer-skip     &    & 0.896 &   0.881 &  0.859  & 0.744  \\
\bottomrule
\end{tabular}
\caption{Additional quantitative results (SSIM, see main paper) for the following protocols: i) `5layer' network, ii) 50 thousand training images, iii) skip connections. } 
\label{tab:dense_reg_subsp_addition_experimental_results}
\end{table}

\begin{figure}[htb]
    \subfloat[]{\includegraphics[width=0.122\linewidth]{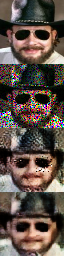}}
    \subfloat[]{\includegraphics[width=0.122\linewidth]{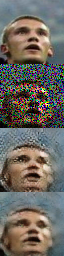}}
    \subfloat[]{\includegraphics[width=0.122\linewidth]{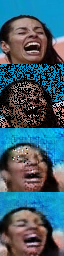}}
    \subfloat[]{\includegraphics[width=0.122\linewidth]{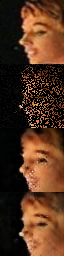}}
    \subfloat[]{\includegraphics[width=0.122\linewidth]{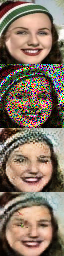}}
    \subfloat[]{\includegraphics[width=0.122\linewidth]{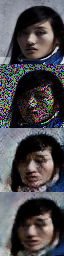}}
    \subfloat[]{\includegraphics[width=0.122\linewidth]{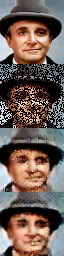}}
    \subfloat[]{\includegraphics[width=0.122\linewidth]{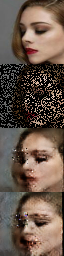}}
    \caption{Qualitative results; best viewed in color. The first row depicts the ground-truth image, the second row the corrupted one (input to methods), the third the output of the baseline cGAN, the fourth illustrates the outcome of our method. The four first columns are based on the protocol of `4layer' network, while the four rightmost columns on the protocol `4layer-50k'. There are different evaluations visualized for faces: (a), (e) Denoising, (b), (f) denoising with augmented noise at test time, (c), (g) sparse inpainting, (d), (h) sparse inpainting with $75\%$ black pixels.}
    \label{fig:dense_reg_subsp_qualitative_results_suppl}
\end{figure}

Unless otherwise mentioned, the experiments in the following paragraphs are conducted in the face case, while the evaluation metrics remain the same as in the main paper, i.e. the noise during training/testing and the SSIM evaluation metric.

\subsection{Large scale natural images}
\label{ssec:rocgan_experiment_imagenet}
To delineate further the performance of our model in different settings, we conduct an experiment with Imagenet~\citep{russakovsky2015imagenet}, a large dataset for natural images. We utilize the training set of Imagenet which consists of $1,2$ million images and its testset that includes $98$ thousand images. 

The experimental results are depicted in table~\ref{tab:rocgan_imagenet_quantitative}. The outcomes corroborate the experimental results of the main paper, as \modelname{} outperforms the cGAN in both tasks. We note that the AAE works as the upper limit of the methods  and denotes the representation power that the given encoder-decoder can reach.

\subsection{Different architectures}
To assess whether \modelname{}'s improvement is network-specific, we implement different architectures including more layers. The goal of this work is not to find the best performing architecture, thus we do not employ an exhaustive search in all proposed cGAN models. Our goal is to propose an alternative model to the baseline cGAN and evaluate how this works in different networks. 

We implement three additional networks which we coin `5layer', `6layer' and `4layer-skip'. Those include five, six layers in the encoder/decoder respectively, while the `4layer-skip' includes a lateral connection from the output of the third encoding layer to the input of the second decoding layer. The first two increase the capacity of the network, while the `4layer-skip' implements the modification for the skip case in the `4layer' network\footnote{We test \modelname{} and the respective baseline for 2, 4, 5, 6 layer cases covering a range of deeper networks.}. 

We evaluate these three networks as in the `4layer' network (main paper); the results are added in table~\ref{tab:dense_reg_subsp_addition_experimental_results}. Notice that both `5layer' and `6layer' networks improve their counterpart in the  `4layer' case, however the `6layer' networks do not improve their `5layer' counterpart. This can be partly attributed to the increased difficulty of training deeper networks without additional regularization techniques~\citep{he2015deep}. In addition, we emphasize that the denoising and the sparse inpainting results cannot be directly compared, since they correspond to different a) types and b) amount of corruption in all evaluations. Nevertheless, the improvement in the sparse inpainting with additional noise is impressive, given that the hyper-parameters are optimized for the denoising case (see sec~\ref{sec:rocgan_ablation}).  The most critical observation is that in all cases our model consistently outperforms the baseline. The difference is increasing under additional noise during inference time  with up to $\bm{15\%}$ performance improvement observed in the sparse inpainting case.

\subsection{Semi-supervised training}
\label{ssec:rocgan_semi_supervised_experiment}
A side-benefit of our new model is the ability to utilize unsupervised data to learn the AE pathway. Collecting unlabelled data in the target domain is frequently easier than finding pairs of corresponding samples in the two domains. To that end, we test whether \modelname{} support such semi-supervised learning. 

We randomly pick $50,000$ labelled images  while we use the rest three million as unlabelled. The `label' in our case is the corrupted images. The baseline model is trained with the labelled $50,000$ samples. \modelname{} model is trained with $50,000$ images in the reg pathway, while the AE pathway with all the available (unlabelled) samples. 

Table.~\ref{tab:rocgan_semi_supervised_quantitative} includes the quantitative results of the semi-supervised case. As expected the performance in most experiments drops from the full training case, however we observe that the performance in \modelname{} decreases significantly less than cGAN (`baseline-4layer-50k'). In other words, \modelname{} can benefit greatly from additional examples in the target domain. We hypothesize that this enables the AE pathway to learn a more accurate representation, which is reflected to the final \modelname{} outcome.

\begin{table}[htb]
\centering
\begin{tabular}{p{3cm} r c|c||c|c}  
\toprule
  \multirow{3}{*}{\backslashbox{\emph{Method}}{\emph{Task}}} & &  \multicolumn{2}{c}{Denoising} &  \multicolumn{2}{c}{Sparse Inpaint.}  \\
  \cmidrule{3 - 6}
  & & $25\%$ & $35\%$ & $50\%$ & $75\%$ \\
  \cmidrule[\heavyrulewidth](){1 - 6}
          Baseline-4layer-50k &    &  0.788 &   0.747 &  0.798  & 0.617 \\
          Ours-4layer-50k     &    &  0.829 &   0.813 &  0.813  & 0.681  \\
\bottomrule
\end{tabular}
\caption{Quantitative results for the semi-supervised training of \modelname{} (sec.~\ref{ssec:rocgan_semi_supervised_experiment}). The difference of the two models is increased (in comparison to the fully supervised case). \modelname{} utilize the additional unsupervised data to improve the mapping between the domains even with less corresponding pairs.} 
\label{tab:rocgan_semi_supervised_quantitative}
\end{table}

\subsection{Alternative error metric}
\label{ssec:rocgan_alternative_error_metrics}
In our experimental setting every input image should be mapped (close) to its target image. To assess the domain-specific performance for faces, we utilize the cosine distance distribution plot (CDDP).

One of the core features in images of faces is the identity of the person. We utilize the well-studied recognition embeddings~\citep{schroff2015facenet} to evaluate the similarity of the target image with the outputs of compared methods. The ground-truth identities in our case are not available in the embeddings' space; we consider instead the target image's embedding as the ground-truth for each comparison. 
The plot is constructed as follows: For each pair of output and corresponding target image, we compute the cosine distance of their embeddings; the cumulative distribution of those distances is plotted. Mathematically the distance of the $n^{th}$ pair is formulated as:
\begin{equation}
    \mathcal{F}(\bm{y}^{(n)}, \bm{o}^{(n)}) = \frac{<\bm{\Phi}(\bm{y}^{(n)}), \bm{\Phi}(\bm{o}^{(n)})>}{||\bm{\Phi}(\bm{y}^{(n)})||_2 \cdot ||\bm{\Phi}(\bm{o}^{(n)})||_2}
    \label{eq:rocgan_cosine_embeddings}
\end{equation}

where $\bm{o}^{(n)}$ denotes the output of each method, $\bm{y}^{(n)}$ the respective target image and $\bm{\Phi}$ is the function computing the embedding. A perfect reconstruction per comparison, e.g. $\mathcal{F}(\bm{y}^{(n)}, \bm{y}^{(n)})$, would yield a plot of a Dirac delta around one; a narrow distribution centered at one denotes proximity to the target images' embeddings.

The plot with the CDDP is visualized in Fig.~\ref{fig:rocgan_cosine_plot_4layer} for the `4layer' case as detailed in the main paper. The results illustrate that AAE has embeddings that are closer to the target embeddings as expected; from the compared methods the \modelname{} outperform the cGAN in the proximity to the target embeddings.

\begin{figure}[h!]
    \centering
    \subfloat[den 25\%]{\includegraphics[width=0.47\linewidth]{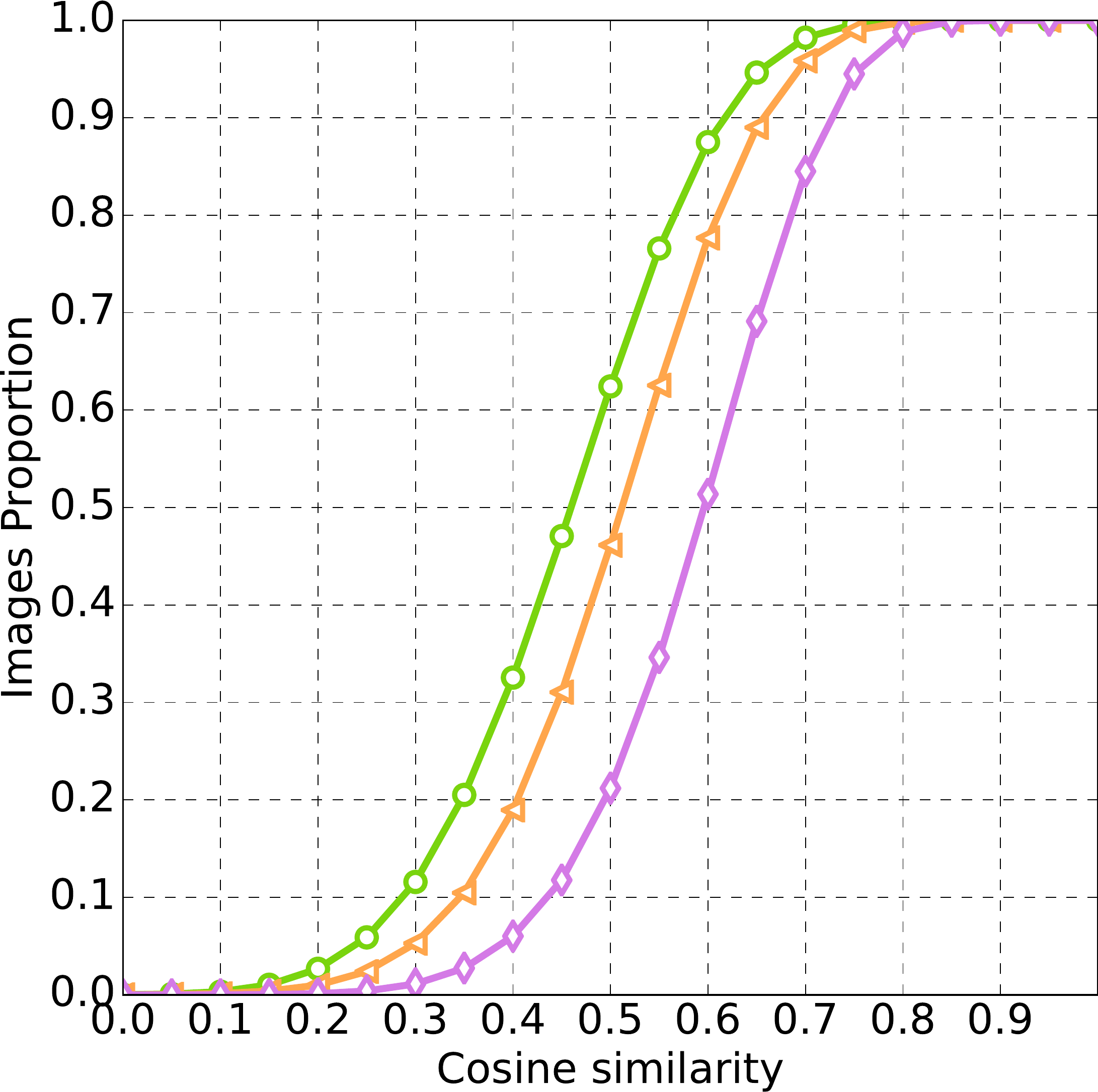}\hspace{0.1mm}}
    \subfloat[den 35\%]{\includegraphics[width=0.47\linewidth]{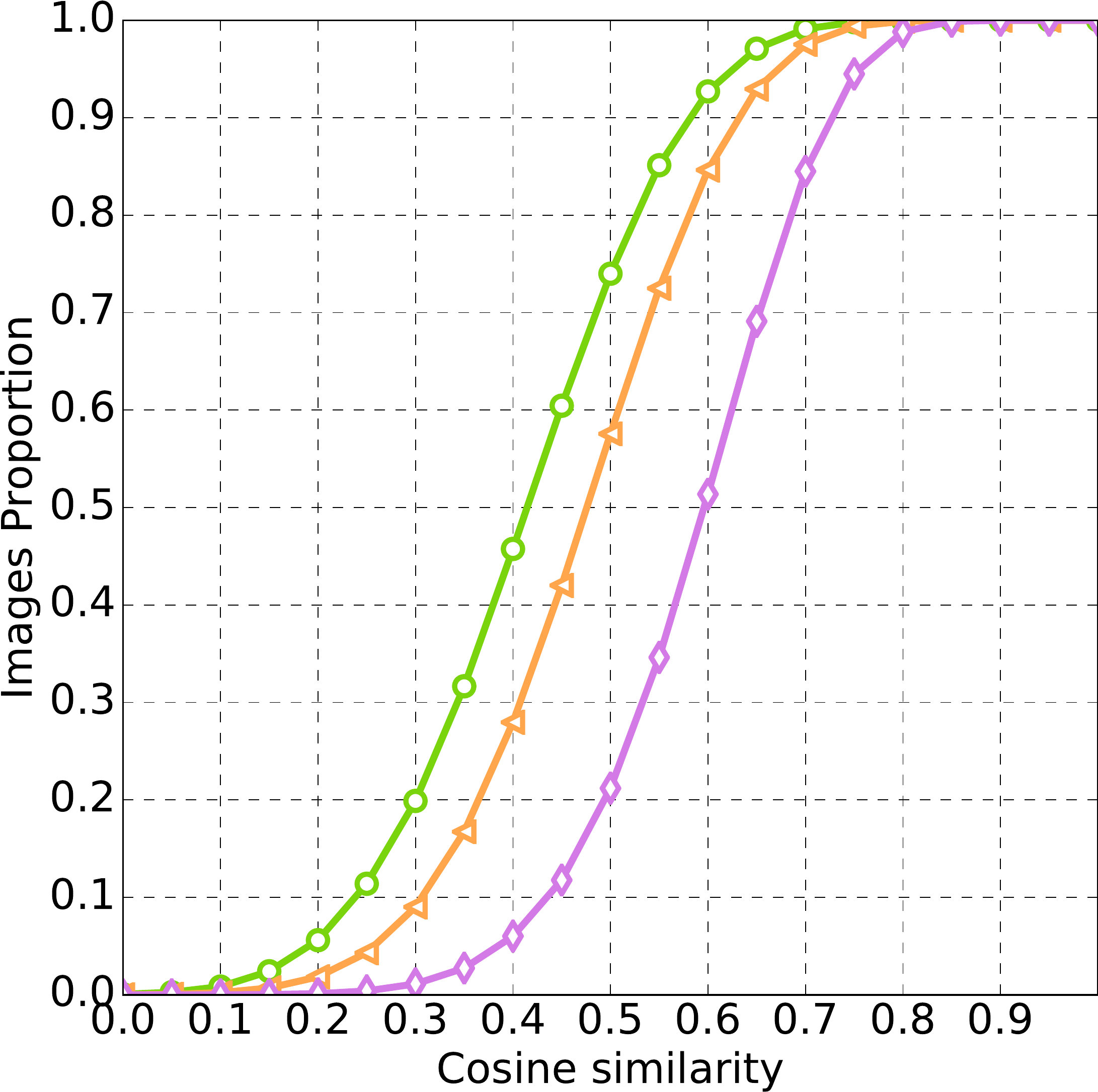}}\\\vspace{-4.2mm}  
    \subfloat[inp 50\%]{\includegraphics[width=0.47\linewidth]{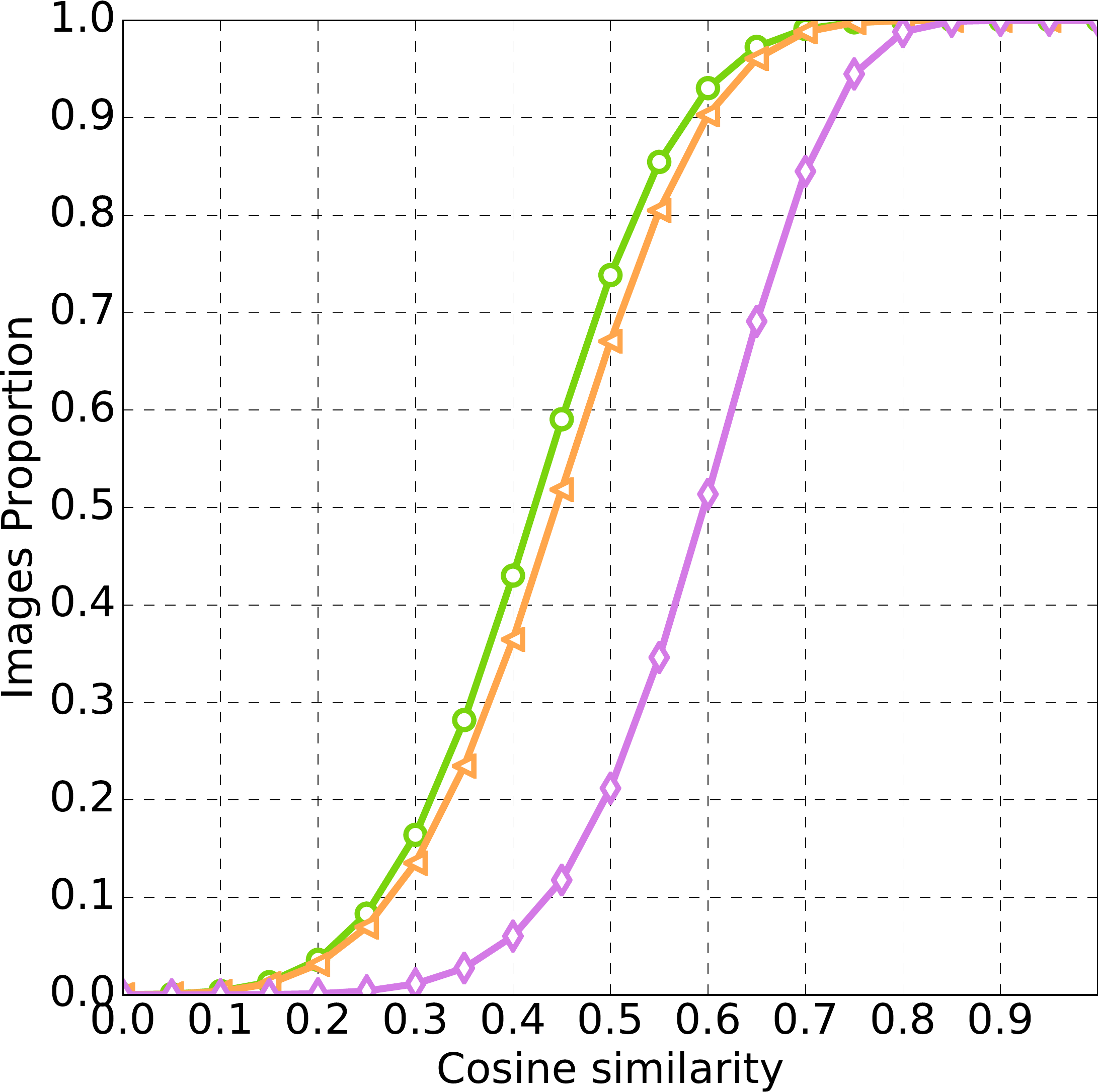}\hspace{0.1mm}}
    \subfloat[inp 75\%]{\includegraphics[width=0.47\linewidth]{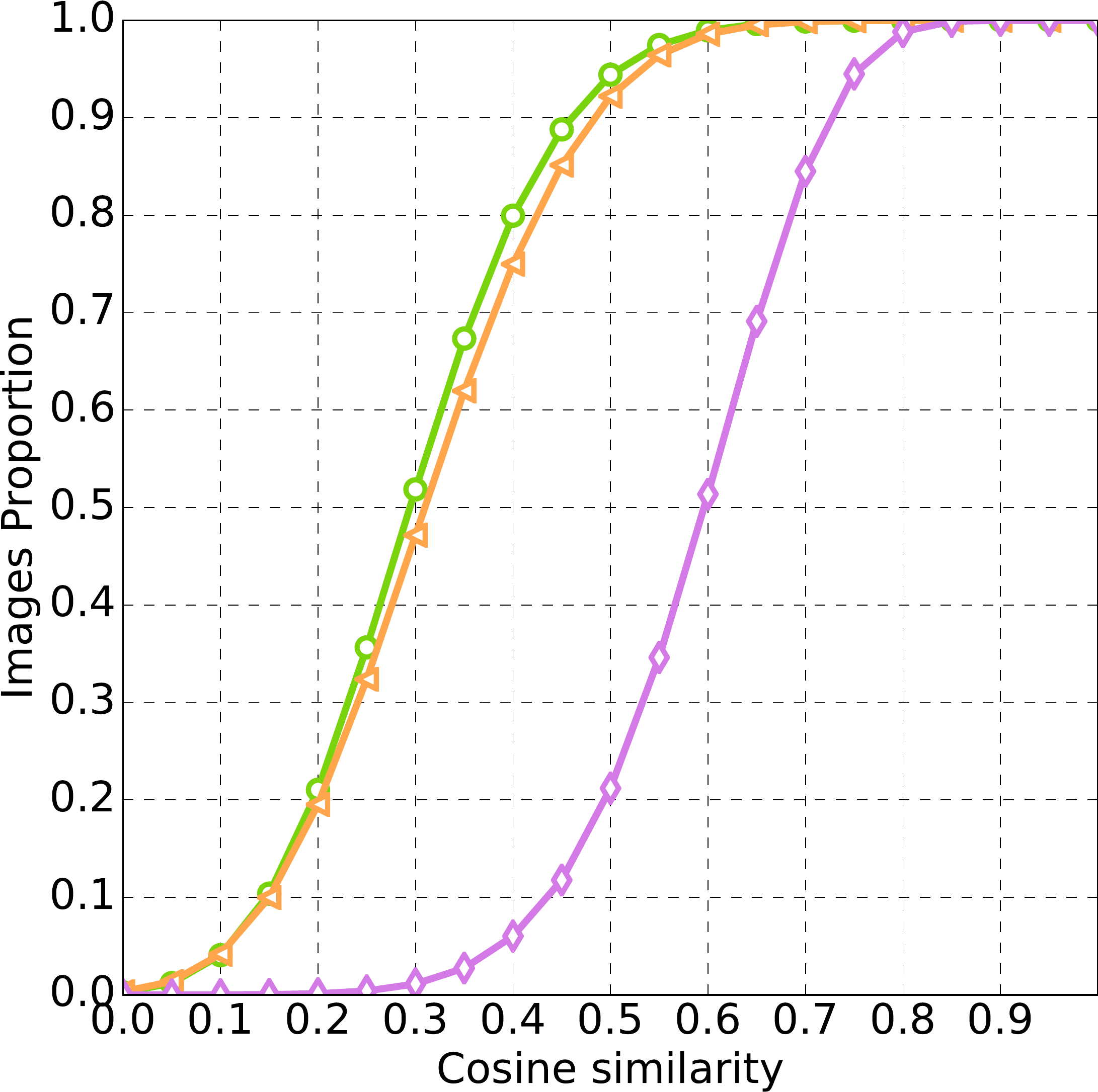}} \\ 
    \includegraphics[width=0.3\linewidth]{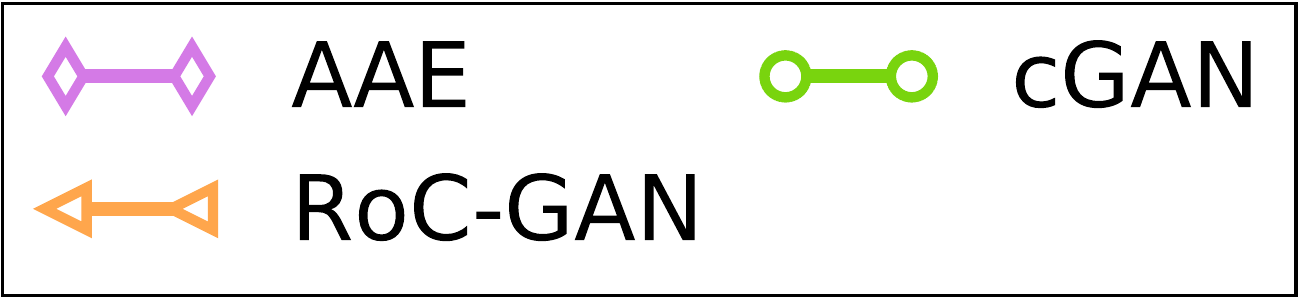}
\caption{Cosine distance distribution plot (sec.~\ref{ssec:rocgan_alternative_error_metrics}). A perfect reconstruction per compared image would yield a plot of a Dirac delta around one; a narrow distribution centered at one denotes proximity to the target images' embeddings. The word 'den' abbreviates denoising and 'inp' sparse inpainting.}
\label{fig:rocgan_cosine_plot_4layer}
\end{figure}

\subsection{Additional experimental details}
All the images utilized in this work are resized to $64 \times 64 \times 3$. In the case of natural scenes, instead of rescaling the images during the training stage, we crop random patches in every iteration from the image. We utilize the ADAM optimizer with a learning rate of $2\cdot 10^{-5}$ for all our experiments. The batch size is $128$ for images of faces and $64$ for the natural scenes.

In table~\ref{tab:rocgan_4layer_filters} the details about the layer structure for the `4layer' generator are provided; the other networks include similar architecture as depicted in tables~\ref{tab:rocgan_5layer_filters}, \ref{tab:rocgan_6layer_filters}. The discriminator retains the same structure in all the experiments in this work (see table~\ref{tab:rocgan_disciminator_filters}).

\begin{table*}
    \centering
    \footnotesize
    \caption{Details of the generator for the `4layer' network baseline. Our modified generator includes in the AE pathway the same parameters. The parameters mentioned below are valid also for the `4layer-50k' and the `4layer-skip' networks. `Filter size' denotes the size of the convolutional filters; the last number denotes the number of output filters. BN stands for batch normalization. Conv denotes a convolutional layer, while F-Conv denotes a transposed convolutional layer with fractional-stride.}
    \vspace{-2mm}
\subfloat[Encoder]{
    \begin{tabular}{|c|c|c|c|}
    \hline
    Layer & Filter Size & Stride & BN \\
    \hline
    Conv. 1 & $4\times4\times64$   & 4 & $\times$ \\
    Conv. 2 & $4\times4\times128$  & 2 & \checkmark \\
    Conv. 3 & $4\times4\times256$ & 2 & \checkmark \\
    Conv. 4 & $4\times4\times512$ & 4 & \checkmark \\
    \hline
    \end{tabular}
  \label{tab:rocgan_4layer_encoder}
} \vspace{4mm}
\subfloat[Decoder]{
    \begin{tabular}{|c|c|c|c|}
    \hline
    Layer & Filter Size & Stride & BN\\
    \hline
    F-Conv. 1  & $1\times 1 \times 256$    & 4 & \checkmark \\
    F-Conv. 2 & $4\times 4 \times 128$     & 2 & \checkmark \\
    F-Conv. 3 & $4\times 4 \times 64$     & 2 & \checkmark \\
    F-Conv. 4 & $4\times 4 \times 3$     & 4 & $\times$ \\
    \hline
    \end{tabular}
  \label{tab:rocgan_4layer_decoder}
}
\label{tab:rocgan_4layer_filters}
\end{table*}

\begin{table*} 
    \centering
    \footnotesize
    \caption{Details of the generator for the `5layer' network baseline. }
    \vspace{-2mm}
\subfloat[Encoder]{
    \begin{tabular}{|c|c|c|c|}
    \hline
    Layer & Filter Size & Stride & BN \\
    \hline
    Conv. 1 & $4\times4\times32$   & 2 & $\times$ \\
    Conv. 2 & $4\times4\times64$  & 2 & \checkmark \\
    Conv. 3 & $4\times4\times128$ & 2 & \checkmark \\
    Conv. 4 & $4\times4\times256$ & 2 & \checkmark \\
    Conv. 5 & $4\times4\times768$ & 4 & \checkmark \\
    \hline
    \end{tabular}
  \label{tab:rocgan_5layer_encoder}
} \vspace{4mm}
\subfloat[Decoder]{
    \begin{tabular}{|c|c|c|c|}
    \hline
    Layer & Filter Size & Stride & BN \\
    \hline
    F-Conv. 1  & $1\times1\times 256$    & 4 & \checkmark \\
    F-Conv. 2 & $4\times4\times 128$     & 2 & \checkmark \\
    F-Conv. 3 & $4\times4\times 64$     & 2 & \checkmark \\
    F-Conv. 4 & $4\times4\times 32$     & 2 & \checkmark \\
    F-Conv. 5 & $4\times4\times 3$     & 2 & $\times$ \\
    \hline
    \end{tabular}
  \label{tab:rocgan_5layer_decoder}
}
\label{tab:rocgan_5layer_filters}
\end{table*}

\begin{table*} 
    \centering
    \footnotesize
    \caption{Details of the generator for the `6layer' network baseline. }
    \vspace{-2mm}
\subfloat[Encoder]{
    \begin{tabular}{|c|c|c|c|}
    \hline
    Layer & Filter Size & Stride & BN \\
    \hline
    Conv. 1 & $4\times4\times32$   & 2 & $\times$ \\
    Conv. 2 & $4\times4\times64$  & 2 & \checkmark \\
    Conv. 3 & $4\times4\times128$ & 2 & \checkmark \\
    Conv. 4 & $4\times4\times256$ & 2 & \checkmark \\
    Conv. 4 & $4\times4\times512$ & 2 & \checkmark \\
    Conv. 6 & $4\times4\times768$ & 2 & \checkmark \\
    \hline
    \end{tabular}
  \label{tab:rocgan_6layer_encoder}
} \vspace{4mm}
\subfloat[Decoder]{
    \begin{tabular}{|c|c|c|c|}
    \hline
    Layer & Filter Size & Stride & BN \\
    \hline
    F-Conv. 1  & $1\times1\times 512$    & 2 & \checkmark \\
    F-Conv. 2  & $1\times1\times 256$    & 2 & \checkmark \\
    F-Conv. 3 & $4\times4\times 128$     & 2 & \checkmark \\
    F-Conv. 4 & $4\times4\times 64$     & 2 & \checkmark \\
    F-Conv. 5 & $4\times4\times 32$     & 2 & \checkmark \\
    F-Conv. 6 & $4\times4\times 3$     & 2 & $\times$ \\
    \hline
    \end{tabular}
  \label{tab:rocgan_6layer_decoder}
}
\label{tab:rocgan_6layer_filters}
\end{table*}

\begin{table*}
    \centering
    \footnotesize
    \caption{Details of the discriminator. The discriminator structure remains the same throughout all the experiments in this work.}
    \vspace{-2mm}
\subfloat[Discriminator]{
    \begin{tabular}{|c|c|c|c|}
    \hline
    Layer & Filter Size & Stride & BN \\
    \hline
    Conv. 1 & $4\times4\times64$   & 2 & $\times$ \\
    Conv. 2 & $4\times4\times128$  & 2 & \checkmark \\
    Conv. 3 & $4\times4\times256$ & 1 & \checkmark \\
    Conv. 4 & $4\times4\times1$ & 1 & $\times$ \\
    \hline
    \end{tabular}
} 
\label{tab:rocgan_disciminator_filters}
\end{table*}

 \begin{figure}[!h]
    \centering
    \subfloat[]{\includegraphics[width=0.45\linewidth]{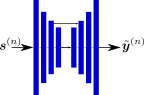}\hspace{1cm}}
    \subfloat[]{\includegraphics[width=0.45\linewidth]{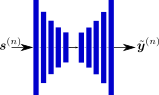}}
\caption{The layer schematics of the generators in case of (a) the `4layer-skip' case, (b) the `5layer' case.}
\label{fig:rocgan_generator_schematic}
\end{figure}

 \section{Ablation study}
\label{sec:rocgan_ablation}

In the following paragraphs we conduct an ablation study to assess \modelname{} in different cases, i.e. effect of hyper-parameters, loss terms, additional noise. Unless mentioned otherwise, the architecture used is the `4layer' network. The experiments are in face denoising with the similarity metric (SSIM) and the setup similar to the main paper comparisons. 

\subsection{Hyper-parameter range}
Our model introduces three new loss terms, i.e.  $\mathcal{L}_{lat}$, $\mathcal{L}_{AE}$ and $\mathcal{L}_{decov}$ (in the case with skip) with respect to the baseline cGAN. Understandably, those introduce three new hyper-parameters, which need to be validated. The validation and selection of the hyper-parameters was done in a withheld set of images. In the following paragraphs, we design an experiment where we scrutinize one hyper-parameter every time, while we keep the rest in their selected value. 
During our experimentation, we observed that the optimal values of these three hyper-parameters might differ per case/network, however in this manuscript \emph{the hyper-parameters remain the same throughout our experimentation}.

The search space  for each term is decided from its theoretical properties and our intuition. For instance, the $\lambda_{ae}$ would have a value similar to the $\lambda_c$\footnote{To fairly compare with baseline cGAN, we use the default value for the  loss as mentioned by \citet{isola2016image}.}. In a similar manner, the latent loss encourages the two streams' latent representations to be similar, however the final evaluation is performed in the pixel space, hence  we assume that a value smaller than  $\lambda_c$ is appropriate. 

In table~\ref{tab:rocgan_ablation_hyperparam_1}, we assess different values for the $\lambda_{l}$. The results demonstrate that values larger than $10$ the results are similar, which dictates that our model resilient to the precise selection of the latent loss hyper-parameter.

\begin{table}[htb]
\centering
\begin{tabular}{|c||c|c|c|c|c|c|c|c|c|}
\toprule

\emph{$\lambda_{l}$} & 1      & 5      & 10     & 15      & 20    & 25    & 30      & 50    & 100\\ \cmidrule[\heavyrulewidth](){1 - 10}
      $25\%$ noise   & 0.7936 & 0.8046 & 0.8179 & 0.8234 & 0.8320 & 0.8343 & 0.8312 & 0.8299 & 0.8321\\
      $35\%$ noise   & 0.7718 & 0.7820 & 0.7927 & 0.8016 & 0.8150 & 0.8211 & 0.8130 & 0.8138 & 0.8226\\
\bottomrule
\end{tabular}
\caption{Validation of $\lambda_{l}$ values (hyper-parameter choices) in the `4layer' network. Unless explicitly mentioned otherwise, all the quantitative results measure the SSIM value. We notice that for $\lambda_{l}$ larger than $10$ the results are significantly better than the baseline. } 
\label{tab:rocgan_ablation_hyperparam_1}
\end{table}

\begin{table}[htb]
\centering
\begin{tabular}{|c||c|c|c|c|c|c|c|c|}
\toprule

\emph{$\lambda_{ae}$} & 1      & 10      & 20     & 50   & 100    & 200    & 250      & 300 \\ \cmidrule[\heavyrulewidth](){1 - 9}
      $25\%$ noise   & 0.7962 & 0.8193 & 0.8277 & 0.8297 & 0.8343 & 0.8301 & 0.8363 & 0.8238\\
      $35\%$ noise   & 0.7765 & 0.8050 & 0.8147 & 0.8158 & 0.8211 & 0.8081 & 0.8218 & 0.8118\\
\bottomrule
\end{tabular}
\caption{Validation of $\lambda_{ae}$ values (hyper-parameter choices) in the `4layer' network. The network remains robust for a wide range of values of the hyper-parameter $\lambda_{ae}$; for $\lambda_{ae} >= 20$ the test results demonstrate a similar performance.} 
\label{tab:rocgan_ablation_hyperparam_2}
\end{table}

\begin{table}[htb]
\centering
\begin{tabular}{|c||c|c|c|c|}
\toprule

\emph{$\lambda_{decov}$} & 1      & 5      & 10     & 15 \\ \cmidrule[\heavyrulewidth](){1 - 5}
      $25\%$ noise   & 0.8963 & 0.8902  & 0.8868 &  0.8891\\
      $35\%$ noise   & 0.8806 & 0.8730  & 0.8693 &  0.8712\\
\bottomrule
\end{tabular}
\caption{Validation of $\lambda_{decov}$ values (hyper-parameter choices) in the `4layer-skip' network. The network is more sensitive to the value of the $\lambda_{decov}$ than the $\lambda_{l}$ and $\lambda_{ae}$.} 
\label{tab:rocgan_ablation_hyperparam_3}
\end{table}

Different values of $\lambda_{ae}$ are considered in table~\ref{tab:rocgan_ablation_hyperparam_2}. RocGAN are robust to a wide range of values and both the visual and the quantitative results remain similar. Even though the best results are obtained with $\lambda_{ae} = 250$, we select $\lambda_{ae} = 100$ for our experiments. The difference for the two choices is marginal, thus we choose the value $100$ since resonates with our intuition ($\lambda_{ae} = \lambda_c$).

The third term of $\lambda_{decov}$ is scrutinized in the table~\ref{tab:rocgan_ablation_hyperparam_3}. In our experimentation, the $\lambda_{decov}$ has a different effect per experiment; based on the results of our validation we choose $\lambda_{decov} = 1$ for our experiments.  

In conclusion, the additional hyper-parameters introduced by our model can accept a range of values without affecting significantly the results.

\subsection{Significance of loss terms}
\label{ssec:rocgan_ablation_loss_term_significance}
To study further the significance of the four loss terms, we experiment with setting  $\lambda_* = 0$ alternatingly. Apart from the `4layer' network, we implement the `4layer-skip' to assess the $\lambda_{decov}  = 0$ case. The `4layer-skip' includes the same layers as the `4layer', however it includes a lateral connection from the encoder to the decoder.

The experimental results in table~\ref{tab:rocgan_ablation_cancelling_loss} confirm our prior intuition that the latent loss ($\mathcal{L}_{lat}$) is the most crucial for our model in the no-skip case, but not as significant in the skip case. In the skip case, the reconstruction losses in both pathways are significant. 

\begin{table}[!h]
\centering
    \begin{tabularx}{1\textwidth}{XXXXXXX}
    \toprule
    \textbf{network} & \bf{$\lambda_{c} = 0$}   & \textbf{$\lambda_{ae}  = 0$} & \textbf{$\lambda_{l}  = 0$} & \textbf{$\lambda_{\pi}  = 0$} & \textbf{$\lambda_{decov}  = 0$} & \textbf{full} \\
     \toprule
     \emph{no-skip} & 0.827            &      0.824           &     0.787          &   0.830   &       -      &  $\bm{0.834}$ \\ 
     \midrule
     \emph{skip}    & 0.841            &      0.855           &     0.890          &   0.872   &      0.889   &  $\bm{0.892}$ \\   
     \bottomrule
\end{tabularx}
\caption{Quantitative results (SSIM) for setting $\lambda_* = 0$ alternatingly (sec.~\ref{ssec:rocgan_ablation_loss_term_significance}). In each column, we set the respective hyper-parameter to zero while keeping the rest fixed.} 
\label{tab:rocgan_ablation_cancelling_loss}
\end{table}

\subsection{Additional noise}
\label{ssec:rocgan_ablation_additional_noise}
To evaluate whether \modelname{}/cGAN are resilient to noise, we experiment with additional noise. We include a baseline cGAN to the comparison to study whether their performance changes similarly. Both networks are trained with the $25\%$ noise (denosing task). 

We evaluate the performance in two cases: i) additional noise of the same type, ii) additional noise of different type. For this experiment, we abbreviate noise as $x/y$ where $x$ depicts the amount of noise in denoising task (i.e. $x\%$ of the pixels in each channel are dropped with a uniform probability) and $y$ the sparse inpainting task (i.e. $y\%$ black pixels). In both cases, we evaluate the performance by incrementally increasing the amount of noise. Specifically, both networks are tested in $25/0$, $35/0$, $50/0$ for noise of the same type and $25/10$, $25/20$ and $25/25$ for different type of noise.  We note the networks have not been trained on any of the testing noises other than the $25/0$ case.

\begin{table}[!h]
\centering
\begin{tabular}{|p{3cm}  c|c|c||c|c|c|}  
\toprule
   \emph{Method/Noise}              &  $25/0$ & $35/0$   & $50/0$        & $25/10$ & $25/20$ & $25/25$ \\
  \cmidrule[\heavyrulewidth](){1 - 7}
          Baseline-4layer     &  0.8030 &   0.7650 &  0.6642       & 0.7598 &  0.6698 & 0.6114\\
          Ours-4layer         &  0.8343 &   0.8211 &  0.7678       & 0.8164 &  0.7649 & 0.721\\
  \cmidrule[\heavyrulewidth](){1 - 7}
          Diff.               & 3.898\%   & 7.334\% & 15.598\%     & 7.449\% & 14.198\% & 17.926\% \\
\bottomrule
\end{tabular}
\caption{Quantitative results for the additional noise experiment (sec.~\ref{ssec:rocgan_ablation_additional_noise}). We test the baseline and our model in additional noise cases; as the noise is incrementally augmented the difference is increasing with our model much more resilient to additional noise. This is more pronounced in the different type of noise, i.e. the $25/10$, $25/20$ and $25/25$ cases.} 
\label{tab:rocgan_ablation_extreme_noise}
\end{table}

\begin{figure}[htb]
    \subfloat[25/0]{\includegraphics[width=0.372\linewidth]{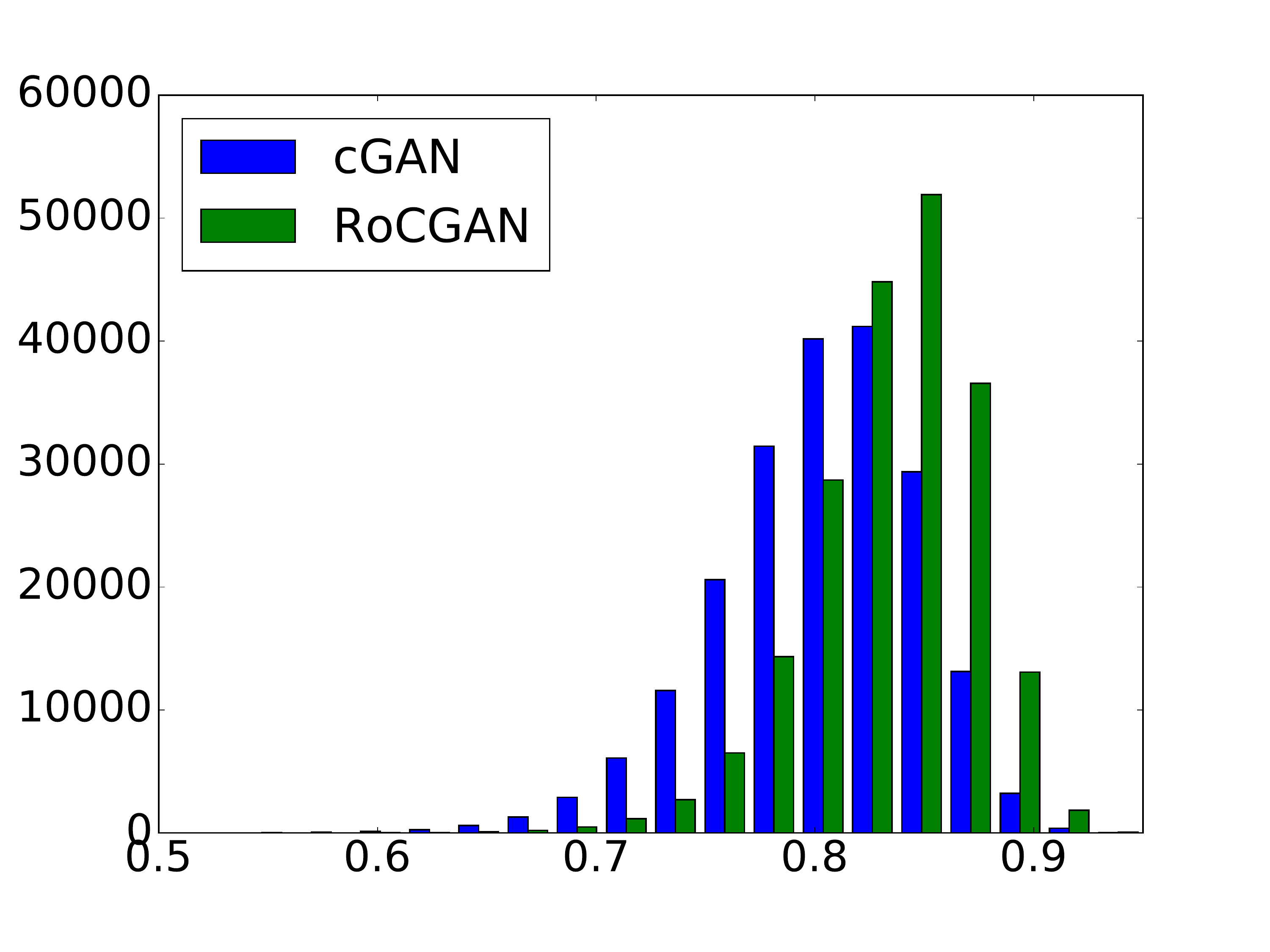}\hspace{-5.2mm}}
    \subfloat[35/0]{\includegraphics[width=0.372\linewidth]{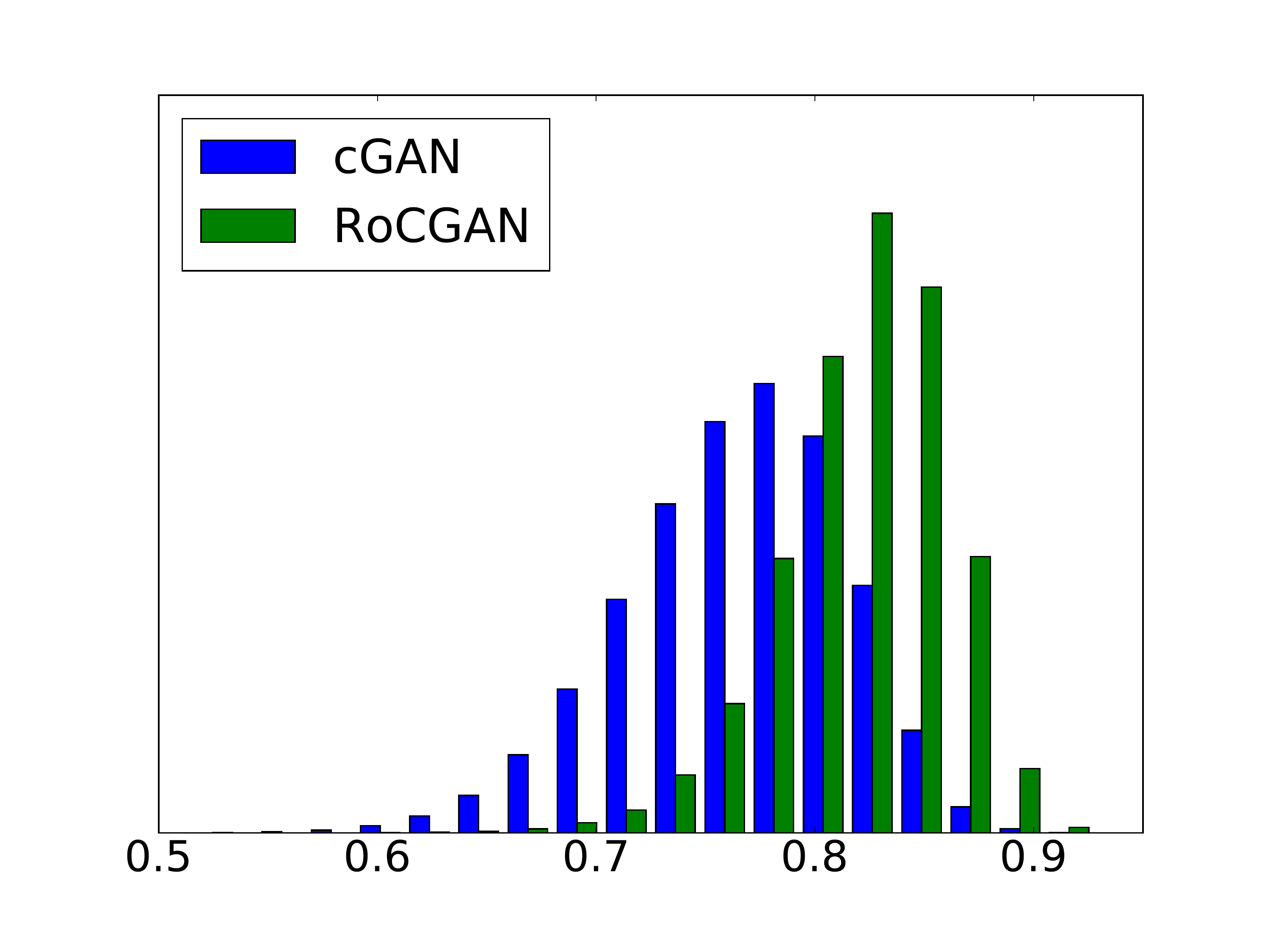}\hspace{-5.2mm}}
    \subfloat[50/0]{\includegraphics[width=0.372\linewidth]{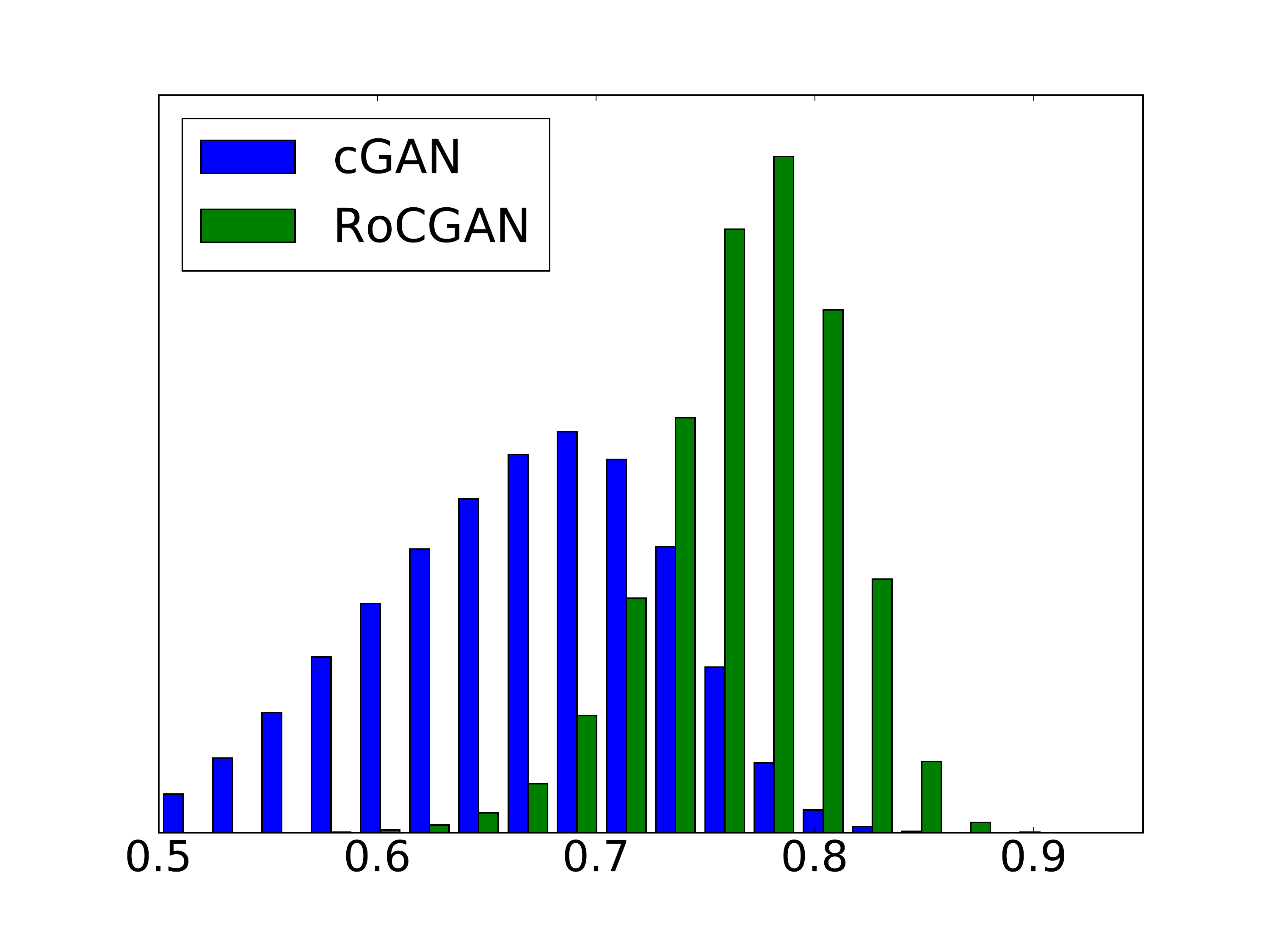}\hspace{-0.2mm}}\\[-0.05ex]
    \subfloat[25/10]{\includegraphics[width=0.372\linewidth]{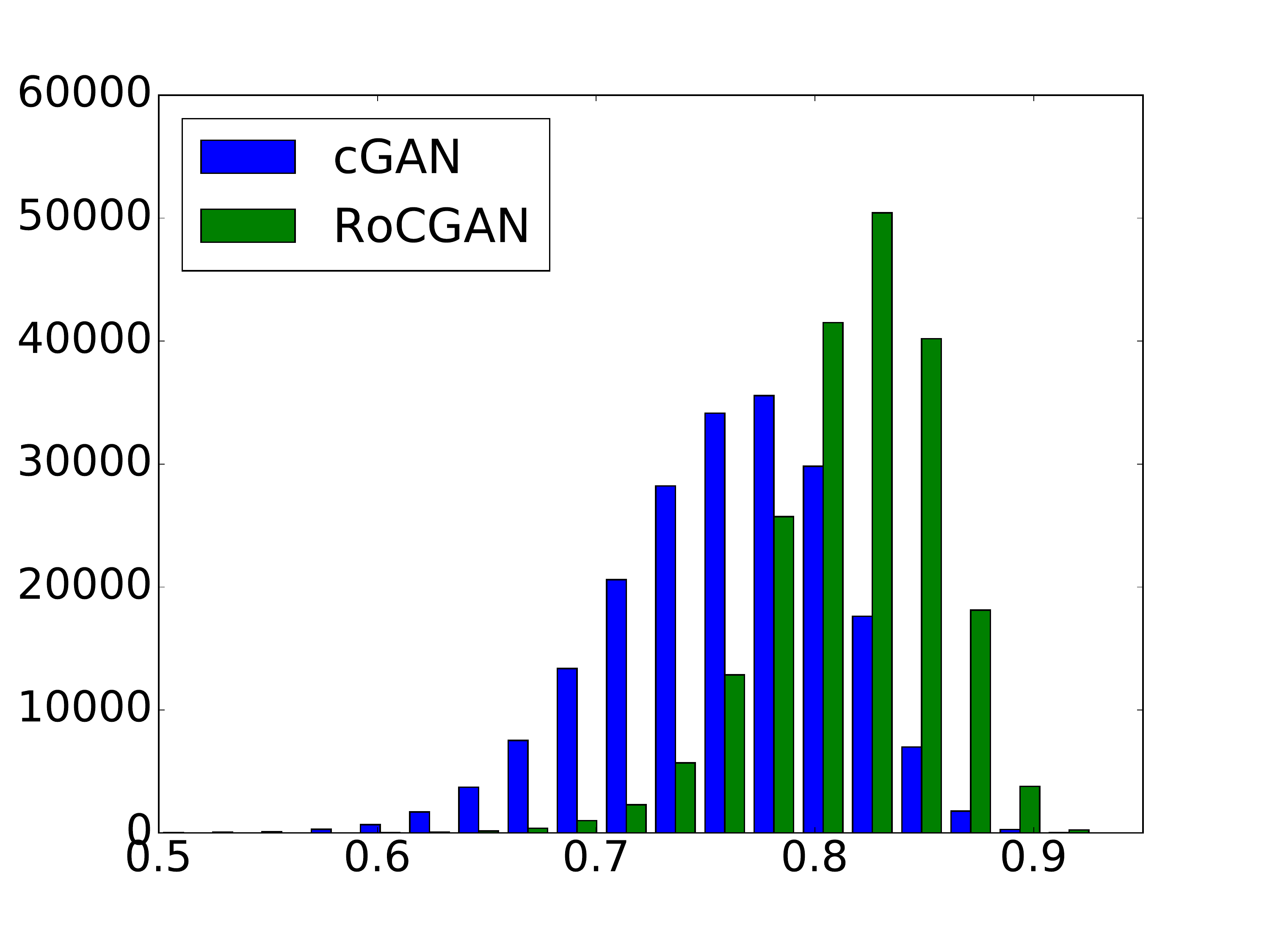}\hspace{-5.2mm}}
    \subfloat[25/20]{\includegraphics[width=0.372\linewidth]{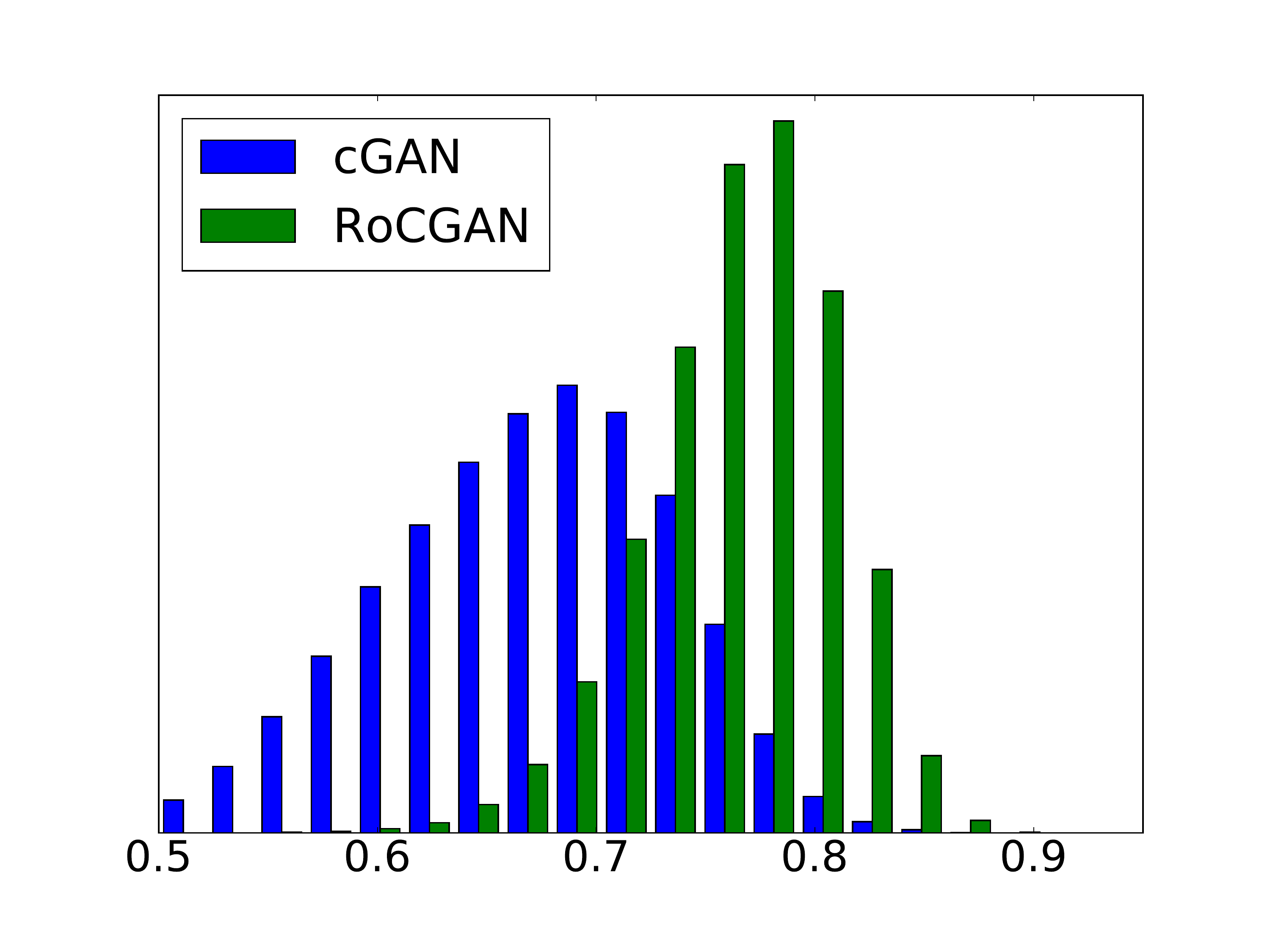}\hspace{-5.2mm}}
    \subfloat[25/25]{\includegraphics[width=0.372\linewidth]{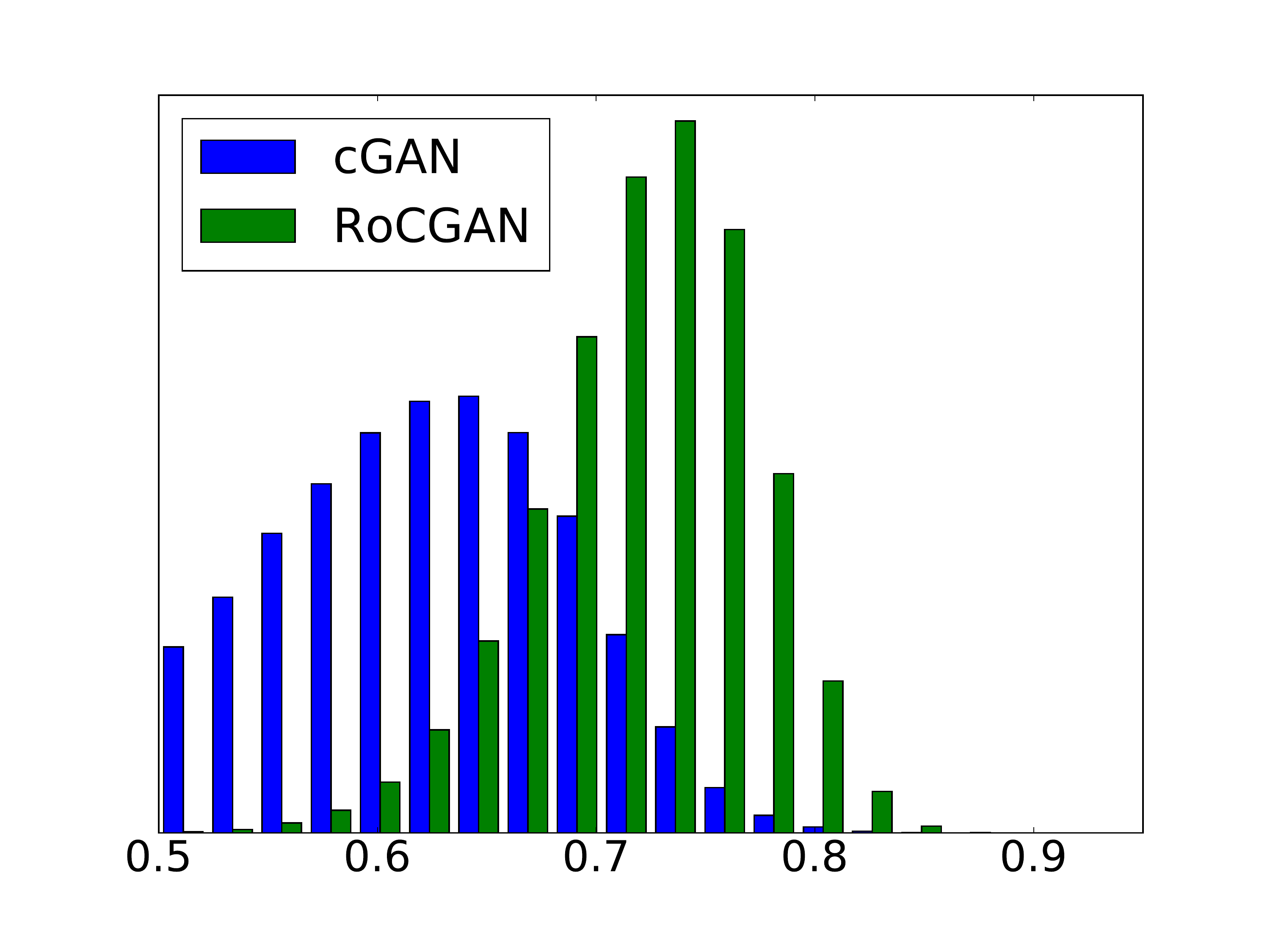}\hspace{-0.2mm}}
\caption{Histogram plots for different noise cases (experiment of sec.~\ref{ssec:rocgan_ablation_additional_noise}). The distribution that is more concentrated to the right of the histogram is closer to the target distribution. In (a) we note that even in the training noise case, the two histograms differ with ours concentrating towards the right. However, as there is additional noise added in (b), (c) the histogram of cGAN deteriorates faster. A similar phenomenon is observed in the cases of (d), (e) and (f).}
\label{fig:rocgan_ssim_histogram0}
\end{figure}

\begin{figure}
    \subfloat[gt]{\includegraphics[width=0.139\linewidth]{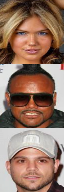}}
    \subfloat[25/0]{\includegraphics[width=0.139\linewidth]{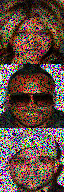}}
    \subfloat[35/0]{\includegraphics[width=0.139\linewidth]{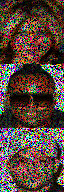}}
    \subfloat[50/0]{\includegraphics[width=0.139\linewidth]{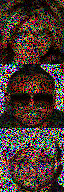}}
    \subfloat[25/10]{\includegraphics[width=0.139\linewidth]{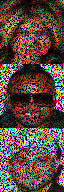}}
    \subfloat[25/20]{\includegraphics[width=0.139\linewidth]{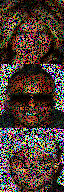}}
    \subfloat[25/25]{\includegraphics[width=0.139\linewidth]{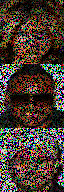}}
\caption{Sample images depicting the corruption level in each case (sec.~\ref{ssec:rocgan_ablation_additional_noise}).}
\label{fig:rocgan_noise_level_samples_suppl}
\end{figure}

\begin{figure}[htb]
    \centering
    \includegraphics[width=0.98\linewidth]{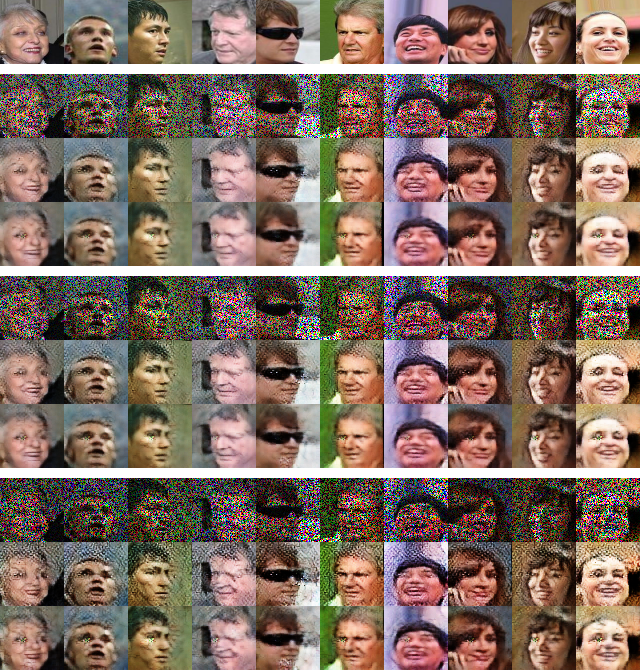}
    \caption{Qualitative figure illustrating the different noise levels (sec.~\ref{ssec:rocgan_ablation_additional_noise}). The first row depicts different target samples, while every three-row block, depicts the corrupted image, the baseline output and our output. The blocks top down correspond to the $25\%, 35\%, 50\%$ noise (25/0, 35/0 and 50/0). The images in the first blocks are closer to the respective target images; as we increase the noise the baseline results deteriorate faster than \modelname{} outputs. The readers can zoom-in to further notice the difference in the quality of the outputs.}
\label{fig:rocgan_qualitative_noise_levels}
\end{figure}

\begin{figure}[htb]
    \centering
    \includegraphics[width=0.98\linewidth]{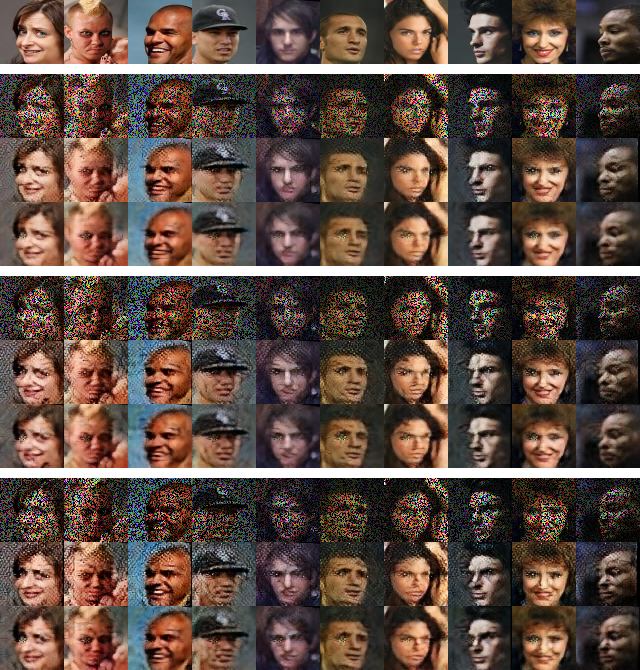}
    \caption{Qualitative figure for different type of noise during testing (see Fig~\ref{fig:rocgan_qualitative_noise_levels}). The first row depicts different target samples, while every three-row block, depicts the corrupted image, the baseline output and our output. The blocks top down correspond to the 25/10, 25/20, 25/25 cases (different type of testing noise). The last block contains the most challenging noise in this work, i.e. both increased noise and of different type than the training noise. Nevertheless, our model generates a more realistic image in comparison to the baseline.}
\label{fig:rocgan_qualitative_noise_levels_diff_noise}
\end{figure}

The quantitative results are exported in table~\ref{tab:rocgan_ablation_extreme_noise}. The results dictate that while cGAN are volatile to additional amount of noise, \modelname{}'s performance is more stable. This becomes even more distinct in the case of different type of noise, where the relative improvement can be up to $\bm{17.9}\%$. 

To illustrate the difference of performance between the two models, we accumulate the SSIM values of each case and divide them in 20 bins\footnote{In our case the SSIM values lie in the interval $[0.5, 0.95]$; a high SSIM value denotes an image close to the target image.}. In Fig.~\ref{fig:rocgan_ssim_histogram0}, the histograms of each case are plotted. We note that \modelname{} is much more resilient to increased or even unseen noise.  Qualitative results of the difference are offered in Fig.~\ref{fig:rocgan_qualitative_noise_levels}, \ref{fig:rocgan_qualitative_noise_levels_diff_noise}. We consider that this improvement in the robustness in the face of additional noise is in its own a considerable improvement to the original cGAN.

\subsection{Adversarial examples}
\label{ssec:rocgan_ablation_adversarial_noise}

Apart from testing in the face of additional noise, we explore the adversarial attacks. Recent works~\citep{szegedy2013intriguing, yuan2017adversarial, samangouei2018defense} explore the robustness of (deep) classifiers. Adversarial attacks modify the input image (of the network) so that the network misclassifies the image. To the authors' knowledge, there has not been much investigation of adversarial attacks in the context of image-to-image translation or any other regression task. However, if we consider adversarial examples as a perturbation of the original input, we explore whether this has any effect in the methods.  

Neither cGAN nor \modelname{} are designed to be robust in adversarial examples, however in this section we explore how adversarial examples can affect them. We consider the FGSM method of \citet{goodfellow6572explaining} as one of the first and simplest methods for generating adversarial examples. In our case, we modify each source signal $\bm{s}$ as:
\begin{equation}
    \tilde{\bm{s}} = \bm{s} + \bm{\eta}
\end{equation}

where $\bm{\eta}$ is the perturbation. That is defined as:
\begin{equation}
    \bm{\eta} = \eps \sign \left( \nabla_{\bm{s}} \mathcal{L}(\bm{s}, \bm{y}) \right)
\end{equation}

with $\eps$ a hyper-parameter, $\bm{y}$ the target signal and $\mathcal{L}$ the loss. 

In our case, we select the $\ell_1$ loss as the loss between the target and the generated images.  We set $\eps=0.01$ following the original paper. The evaluation is added in table~\ref{tab:rocgan_ablation_adversarial_noise}. The results demonstrate that the baseline is affected more from the added noise, while our model is more robust to it.

\begin{table}[!h]
\centering
\begin{tabular}{|p{3cm}  c|c|}  
\toprule
   \emph{Method/Noise}              &  $25\%$ & $25\%$ + adversarial noise\\
  \cmidrule[\heavyrulewidth](){1 - 3}
          Baseline-4layer     &  0.8030 &   0.7561\\
          Ours-4layer         &  0.8343 &   0.8046\\
  \cmidrule[\heavyrulewidth](){1 - 3}
          Diff.               & 3.898\%   & 6.414\%\\
\bottomrule
\end{tabular}
\caption{Quantitative results for the adversarial examples (sec.~\ref{ssec:rocgan_ablation_adversarial_noise}). The first column corresponds to the `4layer' case and are added for comparison.} 
\label{tab:rocgan_ablation_adversarial_noise}
\end{table}

\end{document}